\documentclass[conference]{IEEEtran}

\usepackage{amssymb}
\usepackage{amsmath}
\usepackage{graphicx}
\usepackage{amsthm}
\usepackage{tikz}
\usepackage{listings}  \lstset{breaklines}
\usepackage[ruled]{algorithm2e}
\usepackage{booktabs} 
\usepackage{subcaption}

\theoremstyle{plain}
\newtheorem{theorem}{Theorem}[section]

\newtheorem{lemma}[theorem]{Lemma}

\newtheorem{theorem-definition}[theorem]{Theorem-Definition}

\theoremstyle{definition}

\newtheorem{assumption}{Assumption}

\theoremstyle{remark}

\newtheorem{remark}[theorem]{Remark}

\renewcommand{\geq}{\geqslant}
\renewcommand{\leq}{\leqslant}

\newcommand{\R}{\mathbb{R}}

\newcommand{\norm}[2]{\left\|{#1}\right\|_{#2}}
\newcommand{\inner}[3]{{\left<{#1},{#2}\right>_{#3}}}

\numberwithin{equation}{section}

\begin{document}

\title{Estimating the Jacobian matrix of an unknown multivariate function from sample values by means of a neural network}

\author{
\IEEEauthorblockN{Fr\'{e}d\'{e}ric Latr\'{e}moli\`{e}re}
\IEEEauthorblockA{Department of Mathematics\\
University of Denver\\
Denver, CO 80208\\
Email: frederic@math.du.edu}
\and
\IEEEauthorblockN{Sadananda Narayanappa}
\IEEEauthorblockA{Lockheed Martin Space\\
Littleton, CO 80127\\
Email: sadananda.narayanappa@lmco.com}
\and
\IEEEauthorblockN{Petr Vojt\v{e}chovsk\'y}
\IEEEauthorblockA{Department of Mathematics\\
University of Denver\\
Denver, CO 80208\\
Email: petr@math.du.edu}
}

\maketitle

\begin{abstract}
We describe, implement and test a novel method for training neural networks to estimate the Jacobian matrix $J$ of an unknown multivariate function $F$. The training set is constructed from finitely many pairs $(x,F(x))$ and it contains no explicit information about $J$. The loss function for backpropagation is based on linear approximations and on a nearest neighbor search in the sample data. We formally establish an upper bound on the uniform norm of the error, in operator norm, between the estimated Jacobian matrix provided by the algorithm and the actual Jacobian matrix, under natural assumptions on the function, on the training set and on the loss of the neural network during training.

The Jacobian matrix of a multivariate function contains a wealth of information about the function and it has numerous applications in science and engineering. The method given here represents a step in moving from black-box approximations of functions by neural networks to approximations that provide some structural information about the function in question.
\end{abstract}

\begin{IEEEkeywords}
Jacobian matrix, Jacobian matrix estimator, neural network, nearest neighbor search.
\end{IEEEkeywords}

\section{Introduction}

\subsection{The problem}

We propose to use neural networks to approximate the Jacobian matrix $J$ of a multivariate function $F$, where in the training of the neural network we use only a finite sample of input-output pairs $(x,F(x))$. Crucially, no additional information about $F$ or $J$ enters into the training and thus, in essence, the method proposed here differentiates a multivariate function $F$ solely based on a cloud of sample points.

The algorithm is based on two main ideas:
\begin{itemize}
\item a loss function that utilizes a linear approximation of the sampled function $F$ in terms of the sought-after Jacobian matrix,
\item a nearest neighbor search in preparation of the training data from samples.
\end{itemize}
The neural network is not differentiated during or at the conclusion of the training.

In general, to estimate $J$ from sample points of $F$ is a difficult mathematical problem. Since the Jacobian has numerous applications in mathematics, science and engineering, the ability to estimate it by any means, for instance by a neural network, is valuable in its own right.

Of particular importance here is the fact that the Jacobian can be invoked to detect relations, or lack thereof, between the input variables and output values of the unknown function $F$. While neural networks are a powerful tool for nonlinear regression, the resulting interpolating function is notoriously a black-box, revealing little structural information about the sampled function $F$. By taking advantage of the estimated Jacobian matrix of $F$, we can start peering inside $F$. We therefore expect that the ideas presented here will contribute to new approaches for the training of neural networks designed to reveal structural information about sampled data.

\section{Related results}

\subsection{Differentiating trained neural networks that interpolate $F$}

A natural first approach to computing the Jacobian of a function $F$ by a neural network is to train a network to interpolate $F$ as usual and then differentiate the network itself (which, in the end, is a function).

One drawback of this approach is that it is possible to differentiate a neural network only if all its activation functions are differentiable, a situation that complicates the usage of some popular activation functions, such as ReLu \cite{ReLu}. (See \cite{Berner} for a way of handling activation functions that are differentiable almost everywhere.)

A more serious obstacle is the well-known fact that even if a given function is a good approximation of another function---say in the sense of the uniform norm---their derivatives can be drastically different \cite{Rudin}. A simple but illustrative example is given in Figure \ref{uniform-fig}. The sequence of sine waves converges uniformly on $\R$ to the constant zero function since the amplitude of the waves decreases to $0$. However, as the frequencies of the waves grow to infinity, the derivatives of the sine waves grow arbitrarily large and do not converge at all (not even pointwise) over $\R$.

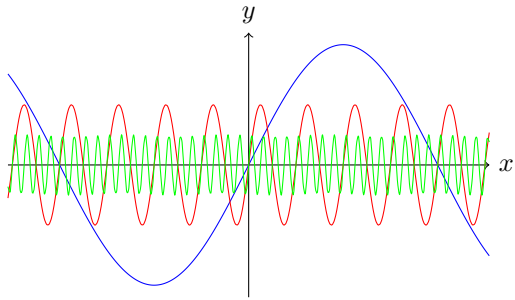
\begin{figure}[!ht]
    \centering
  \begin{tikzpicture}[scale=0.8]
    \draw[->] (-4.0,0.0)--(4.0,0) node[right] {$x$};
    \draw[->] (0,-2.2)--(0,2.2) node[above] {$y$};
    \draw[domain=-4:4, samples=200, smooth, blue] plot ({\x},{2*sin(deg(\x))}) ;
    \draw[domain=-4:4, samples=200, smooth, red] plot ({\x},{sin(8*deg(\x))}) ;
    \draw[domain=-4:4, samples=200, smooth, green] plot ({\x},{0.5*sin(32*deg(\x))}) ;
  \end{tikzpicture}
  \caption{Uniform convergence of functions does not imply converge of their derivatives.}\label{uniform-fig}
\end{figure}

In the context of neural networks, if a neural network is overfitted, then, in a manner similar to Runge's phenomenon \cite{Runge}, the derivative of the overfitted neural network will likely have little resemblance to the derivative of the sampled function $F$.

The difficulty with differentiating a neural network trained to interpolate a function is well demonstrated in \cite{He}, where, among other results, the authors train a neural network $A$ on a sampled function $F$, differentiate the resulting neural network as a function, thus obtaining its Jacobian matrix $J_A$, and then compare the interpolated values of $F$ produced by $A$ with linear approximations of $F$ based on the Jacobian matrix $J_A$. They find that ``the $J$ estimation task fails'' and conclude that neural networks ``may be not suitable to handle derivative signal analysis.''

\subsection{Differentiating neural networks during training}

In the highly influential paper \cite{Lagaris}, the authors proposed a method for training a neural network to solve (higher order) differential equations. The main idea of \cite{Lagaris} is to differentiate the neural network during training and use this (higher order) derivative in the loss function. The training points are typically selected on a regular grid and hence it must be possible to evaluate all relevant functions at such points. The method of \cite{Lagaris} is robust and it can be adopted to multivariate functions. We mention \cite{Rudd} as one of the many papers ultimately based on \cite{Lagaris}.

In the simplest case covered by \cite{Lagaris}, an approximate solution $y$ to the differential equation $y'(x) = F(x)$ is obtained by comparing the neural network derivative $A'$ to $F$ during training. Once $A$ is fully trained, $A'$ should be a good approximation of $F$, that is, $A$ itself should be a good approximation of an antiderivate of $F$.

In order to obtain an approximation for the derivative of $F$, one could now differentiate the trained network $A$ twice (running into the issues described in the previous subsection), or start over with the differential equation $y'(x)=F''(x)$ (which presumably requires $F'$ to be already known), or consider a variation of \cite{Lagaris} in which the neural network $A$ is integrated rather than differentiated during training. Little seems to be known about integration of neural networks.

\subsection{Situations in which the Jacobian matrix can be computed by standard methods}

One might work with data sampled from a function that satisfies a known differential equation, say for physical reasons. In the simplest case where the equation is first order and linear, the Jacobian matrix of the function can thus be computed directly from the differential equation. This essentially avoids the problem of estimating the Jacobian matrix by neural network altogether and relies on standard mathematical methods.

Similarly, the well-known automatic differentiation tool Autograd \cite{Autograd} (that is often used in connection with neural networks) conveniently differentiates a function whose code is written in NumPy \cite{NumPy}, but it requires the function to be given explicitly (in NumPy code), not just by sample values.

\subsection{Gradient estimation}

Gradient estimation is a vast topic, cf. \cite{Fu}. In the context of neural networks, gradient estimation typically refers to an estimation of the gradient of the explicitly given loss function. Gradient estimation is critical for the stochastic gradient descent algorithm, which is in turn key in the learning algorithm of neural networks \cite{Rumelhart}. The backpropagation algorithm \cite{Bryson,Dreyfus,Kelley} is, in essence, an algorithm that efficiently computes the gradient of a loss function using a graph-directed implementation of the chain rule. However, backpropagation relies on symbolic differentiation or on a tool similar to Autograd, as well as on a specific form of the neural network.

Another meaning of the phrase ``gradient estimation'' arises in situations when differentiating a given function is slow or impossible and its Jacobian matrix is therefore merely approximated. For instance, in \cite{Wang} the authors give an efficient approximation of the Jacobian matrix by fast Fourier transform in the context of MRI. Such methods typically rely on a prescribed model for the function whose gradient is being estimated.

\medskip

In contrast with all these results, we do not differentiate a neural network that has been trained to interpolate $F$, nor do we differentiate the neural network during training. Rather, we \emph{directly train a neural network to estimate the Jacobian matrix of an unknown function $F$ from given sample values of $F$}, regardless of the context. We prove rigorously that the Jacobian estimator approaches the Jacobian of $F$ in norm, under reasonable assumptions on $F$, the sample set and the performance of the neural network used.

\section{Preliminaries}

In this section we define the Jacobian matrix, recall the linear approximation formula for multivariate functions, and overview the general interpolation problem with a view toward neural networks. Readers familiar with these topics can skip forward to Section \ref{Sc:Algorithm}.

\subsection{Jacobian matrix and linear approximations}

Suppose that $c$, $d$ are positive integers, $U$ is an open, bounded subset of $\R^d$, and $F : U \to \R^c$ is a Fr{\'e}chet differentiable function on $U$ \cite[Ch. 9]{Rudin}. (Note that $d$ stands for the dimension of the domain $\R^d$ and $c$ for the dimension of the codomain $\R^c$.)

Writing $x=(x_1,\ldots,x_d)$ for $x\in\R^d$ and $F = (F_1,\ldots,F_c)$ with $F_j : \R^d \rightarrow \R$ for each $j\in \{1,\ldots,c\}$, the \emph{Jacobian matrix} of $F$ is the matrix
\begin{equation*}
  J = J_F = \left(
  \begin{array}{ccc}
    \frac{\partial F_1}{\partial x_1} & \cdots & \frac{\partial F_1}{\partial x_d} \\
    \vdots & & \vdots \\
    \frac{\partial F_c}{\partial x_1} & \cdots & \frac{\partial F_c}{\partial x_d}
  \end{array}\right)\text.
\end{equation*}
In words, $J_F$ is the matrix whose rows are the gradients of $F_1,\dots,F_c$.

Since the Jacobian $J_F$ is the matrix of the Fr{\'e}chet derivative of $F$ with respect to the canonical basis, it satisfies by definition the following general linear approximation property (which is of course the very idea of the derivative):
\begin{equation}\label{J-eq}
  \lim_{h\rightarrow 0} \frac{F(x+h)-F(x)-J_F(x)\cdot h }{\norm{h}{\R^d}} = 0\text,
\end{equation}
where $\norm{h}{\R^d}$ is the usual Euclidean norm
\begin{displaymath}
    \norm{h}{\R^d} = \norm{(h_1,\dots,h_d)}{\R^d} = \left(\sum_{i=1}^d h_i^2\right)^{1/2}.
\end{displaymath}
Equation \eqref{J-eq} will play a crucial role in the loss function of our neural network.

\subsection{The interpolation problem in general}

In a regression problem with noise, we are given a finite set $S = \left\{ (x,F(x)+\epsilon_x) : x \in X \right\}$ of pairs of input-output values sampled from an unknown function $F : U\subseteq\R^d \rightarrow \R^c$, where $X\subseteq U$ and $(\epsilon_x)_{x\in X}$ is a family of independent random variables, all with mean $0$, representing noise. The goal is to find an estimate for $F$.

If we make the very strong assumption that $F$ is linear, i.e., that there exists a $c\times d$ matrix $A \in M_{c\times d}(\R)$ such that $F(x)=Ax$ for all $x \in U$, then we may apply the standard linear, least square statistical method to derive an estimate for $A$, by minimizing the error
\begin{equation*}
  B \in M_{c\times d}(\R) \mapsto \min_{x \in X} \norm{Bx - F(x)}{\R^c}^2.
\end{equation*}
In this case, we note that the Jacobian matrix of $F$ is again $A$ and, of course, $A$ contains a lot of useful information about $F$.

Without the assumption of linearity of $F$, the problem becomes significantly more complicated and computationally intensive. A general method is to replace the algebra $M_{c\times d}(\R)$ of matrices by a set of nonlinear functions which is parametrized by some points in the parameter space $\R^P$, where $P$ is typically quite large. The main example of interest here is a class of functions called \emph{artificial neural networks}, which, in their simplest form, can be described as a finite chain of alternating compositions of linear functions and functions called activation functions.

Formally, an \emph{artificial neural network} $A$ with $\ell$ layers, with $n_0$ inputs, and with $n_j$ neurons and activation function $f_j:\R\rightarrow\R$ in layer $j$ for each $j\in \{1,\ldots,\ell\}$, is the function
\begin{equation*}
  A = f_\ell^{\times n_\ell} \circ W_\ell \circ f_{\ell-1}^{\times n_{\ell-1}} \circ W_{\ell-1} \circ \ldots \circ f_1^{\times n_1} \circ W_1,
\end{equation*}
where, for each $j \in \{1,\ldots,\ell\}$, the matrix of weight $W_j$ (identified above with the linear map $x\mapsto W_jx$) has size $n_j\times n_{j-1}$, and $f_j^{\times n_j}$ maps $(x_1,\ldots,x_{n_j}) \in\R^{n_j}$ to $(f_j(x_1),\ldots,f_j(x_{n_j}))$. A typical point of view is that the weight matrices $W_j$ are the parameters of the neural network, while the number of layers, the number of neurons in individual layers and the activation functions are fixed for the given problem.

Once the class of neural networks is fixed, the regression problem is then to find a neural network $A$ as above which is a good approximation to a solution of the minimization problem
\begin{equation*}
    B\text{ neural network} \mapsto \min_{x\in X} \norm{B(x) - F(x)}{\R^c}^2 \text.
\end{equation*}

Solving this minimization problem is in general very difficult. A key observation is that an algorithm based on the gradient descent (or its variants, especially the stochastic gradient descent) called \emph{backpropagation} has proven effective in practice. The process of (numerically, approximately) solving the minimization problem is referred to as the \emph{training} of the neural network.

Under advantageous conditions, we can then interpolate between the values given in the sample set $S$ by means of a trained neural network $A$, thus obtaining a (hopefully good) approximation to $F$ on its domain $U$. However, unlike in the linear case, the obtained neural network $A$ is a \emph{black-box} estimator in that it reveals no particular structure of the function $F$. It is therefore nontrivial to use $A$ for anything more than interpolating the sample data.

\subsection{The difficulty in estimating the Jacobian matrix directly from sample points}

Given a set of sample points $\{(x,F(x)) : x\in X \}$ with $X\subseteq U$ finite (and well-distributed over $U$, for instance, $\varepsilon$-dense for an $\varepsilon$ that is small enough for the scale of the problem), estimating the partial derivatives of $F$ is a difficult problem. Here are some reasons why:
\begin{itemize}
\item The sample set $X$ is random and thus, given $x=(x_1,\ldots,x_d) \in X$, we cannot assume that it contains points of the form $(x_1+\Delta x_1,x_2,\ldots,x_d)$, etc, aligned with $x$ in one of the cardinal directions. Consequently, estimating partial derivatives involves a change of basis \emph{at each point}, incurring a lot of computations and numerical errors.
\item Estimating a partial derivative is difficult in general. One way to see the statistical issue is as follows. If $x \in X$ and $x+h \in X$ then $\frac{F(x+h)-F(x)}{h}$ can be used as an approximation for $J(x)$, but any error on $F(x)$ and $F(x+h)$, even if the error is small, is amplified upon dividing by the small quantity $h$. More formally, since we really only know $F(x)+\epsilon_x$ and $F(x+h)+\epsilon_{x+h}$, then $\frac{\epsilon_{x+h}-\epsilon_x}{h}$ will typically be large for small $h$. If the noise variables $\epsilon_x$ and $\epsilon_{x+h}$ have variance $\sigma^2$, then the variance of $\frac{\epsilon_{x+h}-\epsilon_x}{h}$ is $\frac{2\sigma^2}{h}$, which is large when $h$ is small.
\item Even if we could surmount the above two issues, we would still need some regression technique, such as neural networks, to interpolate the values of partial derivatives at inputs not contained in $X$.
\end{itemize}

\section{The algorithm}\label{Sc:Algorithm}

In this section we show in detail how to move one step beyond the regression problem for functions and present an algorithm that estimates the Jacobian matrix of a sampled function $F$ by means of a neural network. The only assumption on $F:U\to\R^d$ is that it is differentiable on the bounded open set $U\subseteq\R^d$. The main idea is to use equation \eqref{J-eq} to estimate $J$ directly from pairs of points in the data cloud $\{(x,F(x)):x\in X\subseteq U\}$.

\subsection{Illustrating the algorithm}

Let us first illustrate the main idea of the algorithm in the simplest case $c=d=1$. Suppose that we wish to train a neural network $\widehat J$ for approximating the derivative of $F:\R\to\R$ from a finite sample set $\{(x,F(x)):x\in X\}$. Suppose that in the process of training $\widehat J$ we encounter a sample point $(a,F(a))$. Let $(b,F(b))$ be another sample point. If $b$ is close enough to $a$ then $F(b)$ is approximately equal to
\begin{displaymath}
    F(a) + F'(a)(b-a).
\end{displaymath}
Since $\widehat J$ is being trained to approximate the unknown derivative $F'$, we naturally consider the known quantity
\begin{displaymath}
    F(a) + \widehat J(a)(b-a)
\end{displaymath}
instead and compare it to $F(b)$, see Figure \ref{Fig1D}.

\begin{figure}[!ht]
\begin{tikzpicture}[scale=1.72]
\draw[domain=1/2:3.8, smooth, variable=\x, blue, thick] plot ({\x}, {3-(\x-3)*(\x-3)/4)});
\draw[domain=1/2:3.8, smooth, variable=\x, red] plot ({\x}, {\x/2+7/4});
\draw[domain=1/2:3.8, smooth, variable=\x, green] plot ({\x}, {\x+3/4});
\draw[dotted] (3,1.9) -- (3,19/4-0.2);
\draw[dotted] (2,1.9) -- (2,19/4-0.2);
\draw[dotted] (5/4-0.1,3) -- (3.8,3);
\draw[dotted] (5/4-0.1,15/4) -- (3.8,15/4);
\draw[dotted] (5/4-0.1,13/4) -- (3.8,13/4);
\draw[dotted] (5/4-0.1,11/4) -- (3.8,11/4);
\draw[->] (1/2,2) -- (3.8, 2) node[right] {$x$};
\draw[->] (5/4, 5/4) -- (5/4, 19/4-0.2) node[above] {$y$};
\draw[black,fill=black] (2,11/4) circle (.18ex);
\draw[black,fill=black] (3,3) circle (.18ex);
\draw[black,fill=black] (3,13/4) circle (.18ex);
\draw[black,fill=black] (3,15/4) circle (.18ex);
\node at (0.9,11/4) {$F(a)$};
\node at (0.9,3) {$F(b)$};
\node at (0.31,13/4) {$F(a){+}F'(a)(b{-}a)$};
\node at (0.31,15/4) {$F(a){+}\widehat J(a)(b{-}a)$};
\node at (2,1.8) {$a$};
\node at (3,1.8) {$b$};
\node at (3.9,11/4+0.1) {$F$};
\draw[|-|] (4.1,3)-- node[right] {loss} (4.1,15/4) ;
\end{tikzpicture}
\caption{The loss (before normalization) for the Jacobian matrix estimator $\widehat J$ resulting from the sample points $(a,F(a))$, $(b,F(b))$ of an unknown function $F:\R\to\R$. The secant line through $(a,F(a))$ with slope $\widehat J(a)$ is in green. The unknown tangent line through $(a,F(a))$ is in red.}\label{Fig1D}
\end{figure}
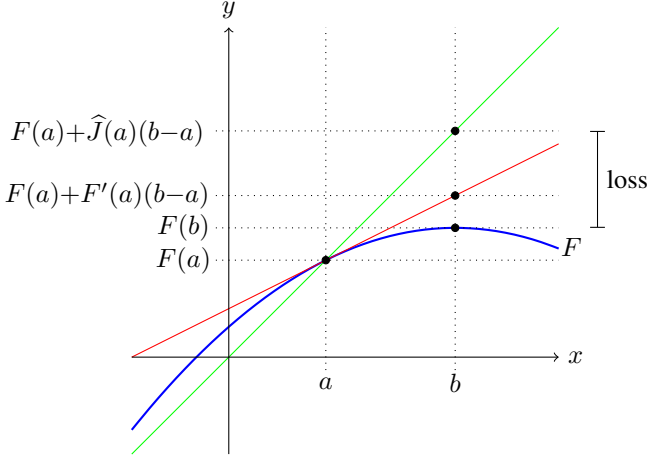

We use the related normalized quantity
\begin{equation}\label{Loss1D}
    \frac{|F(b)-(F(a)+\widehat J(a)(b-a))|^2}{|b-a|^2}.
\end{equation}
as a contribution to the loss function.

In the general case of estimating the Jacobian matrix of $F:\R^d\to \R^c$, the only differences from the $\R\to\R$ case are:
\begin{itemize}
\item The neural network $\widehat J$ takes a point in $\R^d$ as an input and returns a real $c\times d$ matrix.
\item While training $\widehat J$ at $a\in X$, we do not use a randomly chosen sample point $b\in X$ close to $b$ but rather the $k_{max}$ nearest neighbors of $a$ within radius $r_{max}$, where $k_{max}$ and $r_{max}$ are parameters of the algorithm. The nearest neighbors are precalculated and processed in batches.
\item The loss function \eqref{Loss1D} is replaced with its multivariate analog, that is, for every data point $(a,F(a))$ and its neighbor $(b,F(b))$, the contribution to the loss is
\begin{equation}\label{Loss}
    \frac{\norm{F(b)-(F(a)+\widehat J(a)(b-a))}{\R^c}^2}{\norm{b-a}{\R^d}^2}\text.
\end{equation}
\end{itemize}

\subsection{The algorithm}

A pseudo-code for the Jacobian matrix estimator can be found in Algorithm 1.

\begin{algorithm}[!ht]
\DontPrintSemicolon
\SetKwInOut{Input}{input}\SetKwInOut{Output}{output}
\SetKwFunction{NearestNeighbors}{NearestNeighbors}
\SetKwFunction{Union}{Union}
\caption{JacobianEstimator}\label{J-alg}
\Input{\;
    \Indp
    $X$, an array of $N$ vectors in $\R^d$ \;
    $Y$, an array of $N$ vectors in $\R^c$ \;
    $k_{max}$, a positive integer\;
    $r_{max}$, a positive real number
}
\Output{\;
    \Indp
    $\widehat{J}$, a trained neural network
}
\BlankLine
\tcc{initialize neural network}
$\widehat{J} \leftarrow \text{initial neural network}$\;
\BlankLine
\tcc{data for nearest neighbors}
$D \leftarrow [\ ]$\;
\For{$i \leftarrow 0$ \KwTo $N-1$}{
    $N_i \leftarrow $ \NearestNeighbors{$X,i,k_{max},r_{max}$}\;
    $D[i] \leftarrow \{(i,j):j\in N_i\}$\;
}
$D \leftarrow \Union{D}$ \tcp*{list of pairs}
\BlankLine
\tcc{loss function for a batch $B{\subseteq}D$}
\SetKwBlock{FnLoss}{$f \leftarrow$ function( $B$ )}{}
\FnLoss{
    $s \leftarrow 0$\;
    \For{$d\in B$}{
        $i \leftarrow d[0]$\;
        $j \leftarrow d[1]$\;
        $s \leftarrow s + \frac{\norm{Y[j]-(Y[i]+\widehat{J}(X[i])(X[j]-X[i]))}{\R^c}^2}{\norm{X[j]-X[i]}{\R^d}^2}$\;
    }
    \Return $\frac{s}{|B|}$\;
}\textbf{end}\;
\BlankLine
\tcc{the training cycle}
$\widehat{J} \leftarrow$ train $\widehat{J}$ on $D$ using loss function $f$\;
\Return $\widehat{J}$\;
\end{algorithm}

The algorithm uses standard methods of stochastic gradient descent with backpropagation but it relies on a novel loss function and data preparation which utilizes a nearest neighbor search. An implementation of the algorithm in \textsf{TensorFlow} 2.5.0 \cite{TensorFlow} can be found in the appendix.

\begin{table*}[!ht]
\centering
\caption{The testing functions.}\label{Tb:Functions}
\begin{tabular}{lll}
\toprule
name    &function  &domain\\
  \midrule
  $F_0:\R^2\to\R$     &$(x,y)\mapsto x\exp(-x^2-y^2)$ &$(-2,2)^2$\\
  $F_1:\R^2\to\R$     &$(x,y)\mapsto xy$  & $(-1,1)^2$  \\
  $F_2:\R^2\to\R$     &$(x,y)\mapsto x^3 + 2 x y^2$ &$(0,2)^2$\\
  $F_3:\R^2\to\R$     & $(x,y)\mapsto \ln(1+x^2 y)$ & $(0,2)^2$ \\
  $F_4:\R^2\to\R$     & $(x,y)\mapsto (x+y)(x^2+x y^2 + 1)^{-1}$ & $(-1,1)^2$ \\
  $F_5:\R^2\to\R$     &$(x,y)\mapsto \cos(x^2)+\cos(y^2) + 3x$ &$(-1,1)^2$\\
  $F_6:\R^2\to\R$     &$(x,y)\mapsto \sqrt{1+x}+x\sqrt{1+y}$ & $(0,3)^2$ \\
  $F_7:\R^2\to\R$     &$(x,y)\mapsto \arctan(x+y^2)$ &$(-3,3)^2$\\
  \midrule
  $F_8:\R^3\to\R^2$ & $(x,y,z)\mapsto (x(x+y)+y^2+zx, xyz)$ & $(-1,1)^3$ \\
  $F_9:\R^3\to\R^3$ & $(x,y,z)\mapsto (\sin(x y)+\sin(z y),\cos(x+y)+\cos(x+z),x+y+z)$ & $ (-1,1)^3$ \\
  $F_{10}:\R^4\to\R^4$ & $(x,y,z,t)\mapsto (\sin(x y), \cos(x z)+\cos(y t),\frac{1}{10}(x+y))$ & $(-1,1)^4$ \\
  $F_{11} : \R^5\to\R^2$ & $(x,y,z,t,w)\mapsto (x(z+t)+y w, (x+y) \exp(-z^2-w^2-t))$ & $(-1,1)^5$ \\
  \midrule
  $F_{12}: \R^5\to\R$ & $(x,y,z,t,w)\mapsto \exp(-x^2-\frac{xy}{2}-\frac{3z^2}{2}-t+w)$ & $(-1,1)^5$ \\
  \bottomrule
\end{tabular}
\end{table*}

The sample set is represented as two lists $X\subseteq\R^d$ and $Y\subseteq\R^c$ of the same length $N$. For every index $j$, we think of the pair $(X[j],Y[j])$ as a sample point $(x,F(x))$, where $F$ is some unknown differentiable function $F:U\to\R^c$, $U\subseteq\R^d$.

In Algorithm \ref{J-alg}, the initial neural network $\widehat J$ has $d$ cells in its input layer and $c\times d$ cells in its output layer. The function
\begin{displaymath}
    \textsf{NearestNeighbors}(X,i,k_{max},r_{max})
\end{displaymath}
returns all indices $0\le j<N$ such that $X[j]\ne X[i]$ is among the $k_{max}$ nearest neighbors of $X[i]$ in $X$ and $\norm{X[i]-X[j]}{\R_d}<r_{max}$.

\subsection{Notes on the algorithm}

Let us point out additional features of Algorithm \ref{J-alg} that were used in our implementation and that are not necessarily captured in the pseudo-code.
\begin{itemize}
\item In order not to include many nearby points in one batch, we randomly permute the entries of $D$ at the end of the data preparation stage.
\item Even if $|X|=N=|Y|$ and the parameters $k_{max}$ and $r_{max}$ are fixed, then number of training points in each run of the algorithm depends on $X$ (since we do not know in advance how many nearest neighbors of a given point will satisfy the constraint on $r_{max}$). After the training data $D$ is prepared, we adjust the batch size so that it divides the size of $D$ evenly. Note that the training set therefore has cardinality $|D|$ satisfying $N\le |D|\le Nk_{max}$.
\item We allow to set the value of $r_{max}$ to infinity so that every nearest neighbor set has cardinality $k_{max}$ (if $k_{max}<N$).
\item The effect of the parameter $k_{max}$ can be suppressed by setting its value to at least $N$.
\end{itemize}

Note that it is possible for both $(i,j)$ and $(j,i)$ to occur in the training set $D$. If $(i,j)\in D$, which means that $X[j]$ is one of the $k_{max}$ nearest neighbors of $X[i]$ and within radius $r_{max}$ of $X[i]$, it does not necessarily follow that $(j,i)\in D$, too. With each $(i,j)\in D$, the neural network $\widehat J$ will be trained using a linear approximation centered at $X[i]$.

\subsection{Using the trained Jacobian neural network}

Having trained the neural network $\widehat J$ by Algorithm \ref{J-alg}, we can obtain an approximate value of the Jacobian matrix $J$ of the unknown function $F:U\to\R^c$ by computing $\widehat J(x)$ for $x\in U$.

Note that we can also obtain an approximate value of the unknown function $F$ itself at $x\in U$ by locating a nearby point $y$ in the sample set and computing $F(y) + \widehat{J}(y)(x-y)$.

\section{Examples and results}

We now test and validate Algorithm \ref{J-alg}. The testing functions, their names and their domains are summarized in Table \ref{Tb:Functions}, while the results (error estimates) can be found in Table \ref{Tb:Results}.

\begin{remark}
Most functions in Table \ref{Tb:Functions} are scalar valued (as opposed to vector valued) but this is at little loss of generality. Indeed, a vector valued function $F:\R^d\to\R^c$ can be represented by $c$ scalar valued functions $F_i:\R^d\to\R$, $1\le i\le c$, a neural network can be trained on the same sample set to produce the estimated Jacobian matrix (gradient) $\widehat J_i$ for $F_i$, and the estimated Jacobian matrix $\widehat J$ for $F$ is then obtained as $\widehat J = (\widehat J_1,\dots,\widehat J_c)^\top$. However, since training a neural network on a vector valued function is not the same as training several neural networks on scalar valued functions, we have included a few vector valued functions in Table \ref{Tb:Functions} to demonstrate that Algorithm \ref{J-alg} can cope with that situation as well.
\end{remark}

In Subsection \ref{Ss:Big} we train $\widehat J$ for the function $F_0$ of Table \ref{Tb:Functions} on a sample set consisting of $N=10^6$ points (resulting in a training set whose size is between $N$ and $Nk_{max}$), and we compare $\widehat J$ and $J$ visually as vector fields on a regular grid.

In Subsection \ref{Ss:All} we describe in detail two validation methods, one for the situation when the function $F$ is not known to the training algorithm but is known to us, and another for the situation when the function $F$ is truly unknown. We then train Algorithm \ref{J-alg} on every function from Table \ref{Tb:Functions} using random sample sets of various sizes. As expected, we observe that the error improves with the size of the training set and that it gets worse as the volume of the domain increases.

\begin{figure*}[!ht]
   \begin{subfigure}{\columnwidth}
        \centering
        \includegraphics[width=0.775\columnwidth]{./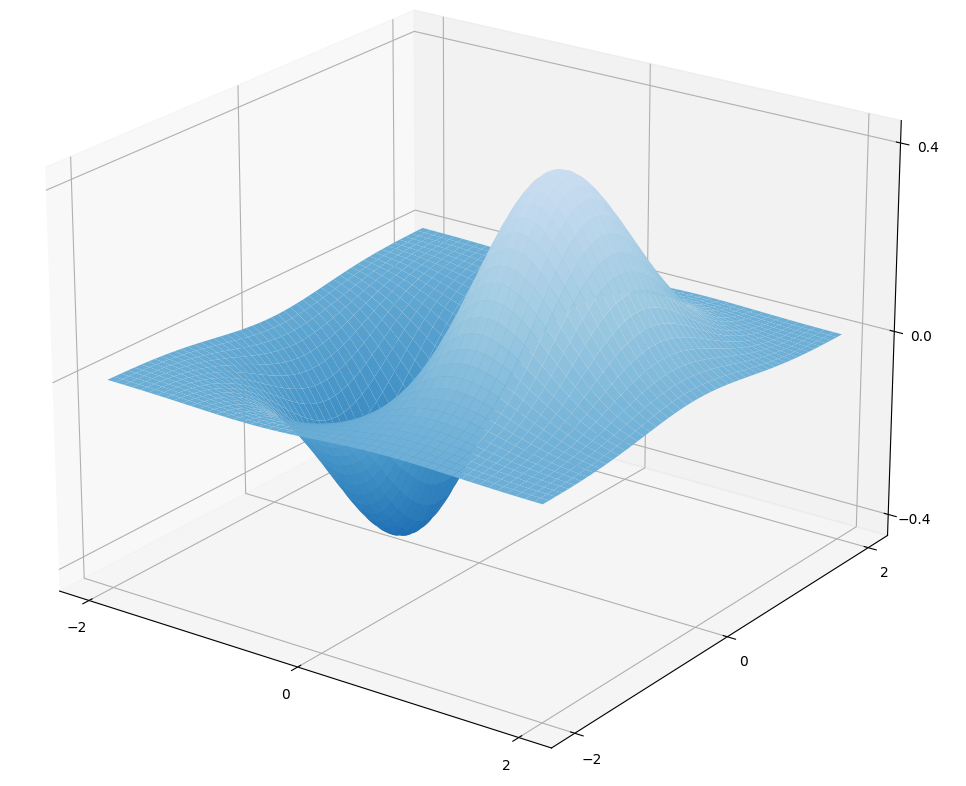}
        \caption{The graph of $F_0$.}
   \end{subfigure}
   \begin{subfigure}{\columnwidth}
        \centering
        \includegraphics[width=0.775\columnwidth, keepaspectratio]{./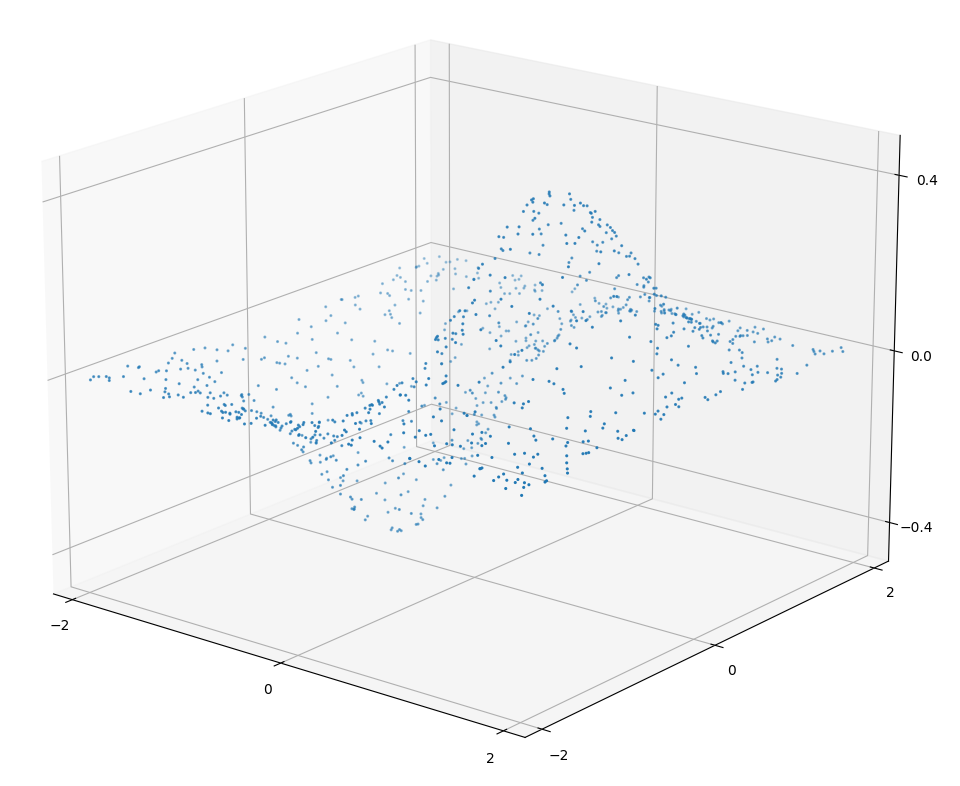}
        \caption{The cloud of training points for $\widehat J$.}
   \end{subfigure}
   \par\vspace*{5mm}
   \begin{subfigure}{\columnwidth}
        \centering
        \includegraphics[width=0.775\columnwidth, keepaspectratio]{./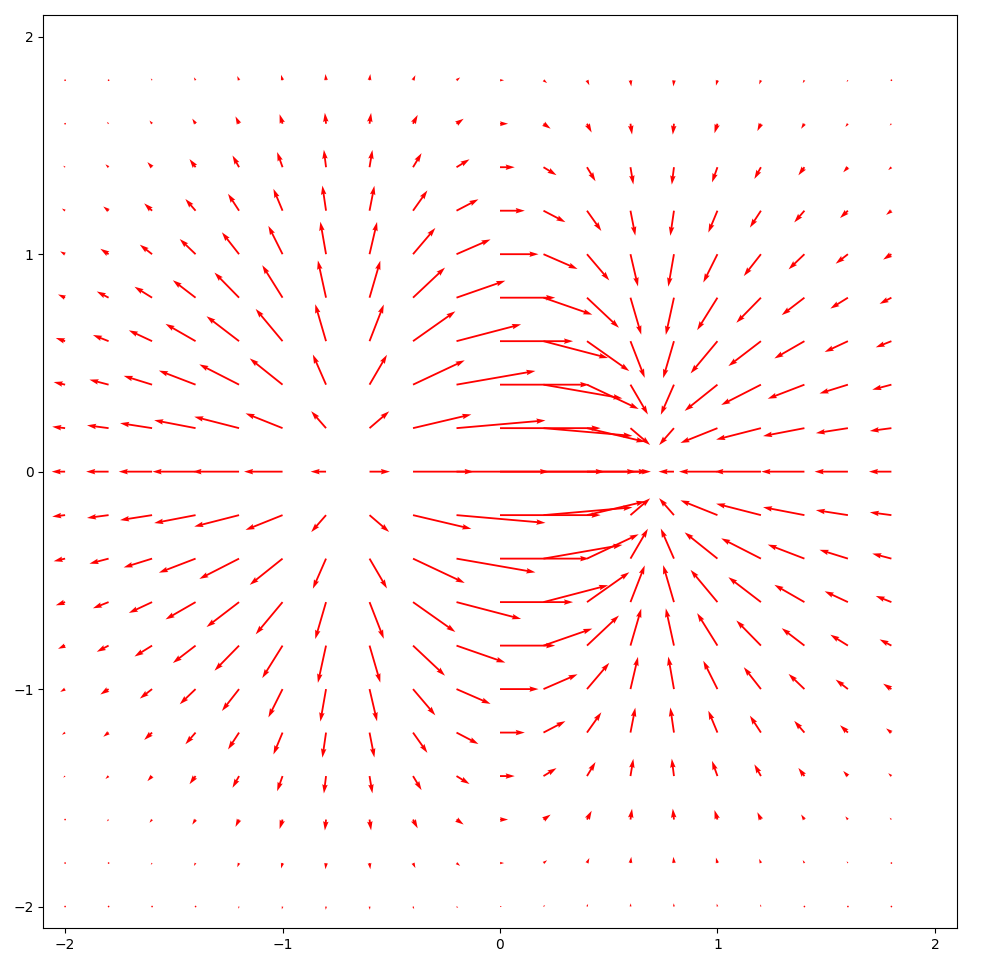}
        \caption{The gradient $J$ (scaled).}
    \end{subfigure}
   \begin{subfigure}{\columnwidth}
        \centering
        \includegraphics[width=0.775\columnwidth, keepaspectratio]{./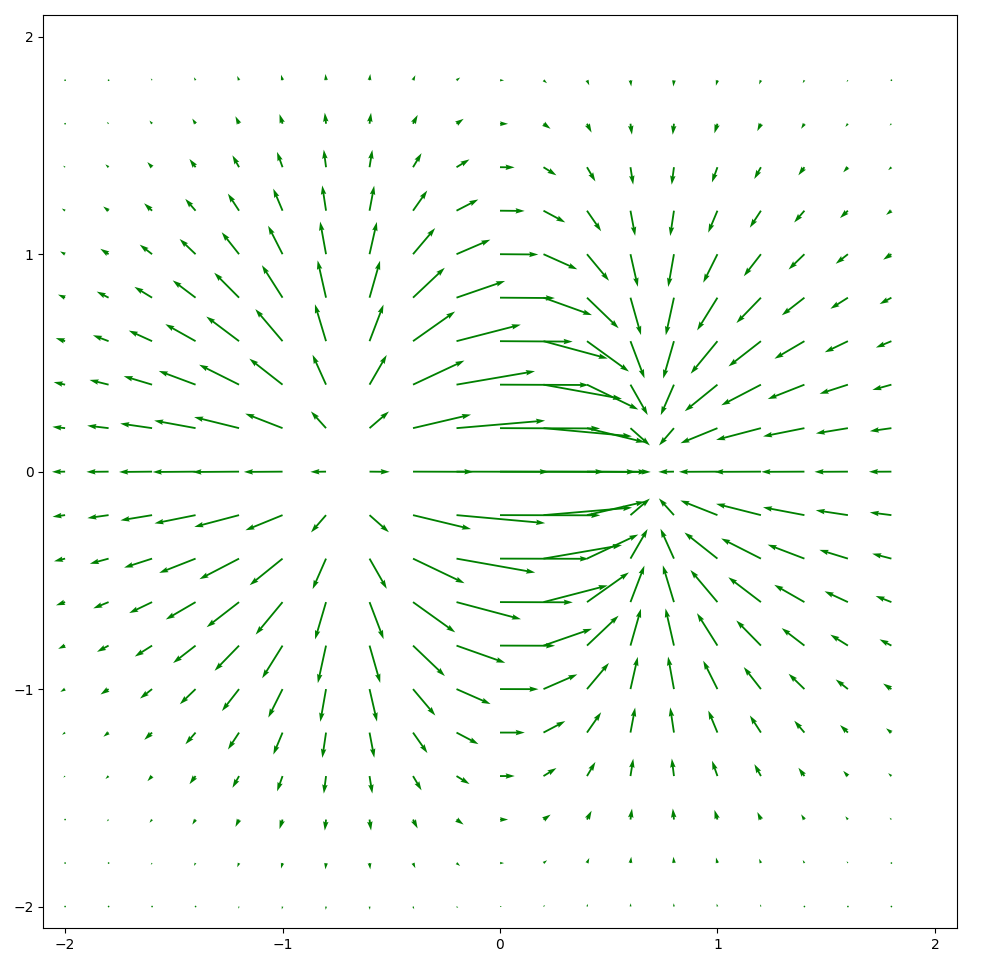}
        \caption{The estimated gradient $\widehat{J}$ (scaled).}
   \end{subfigure}
   \par\vspace*{5mm}
   \begin{subfigure}{\columnwidth}
        \centering
        \includegraphics[width=0.775\columnwidth, keepaspectratio]{./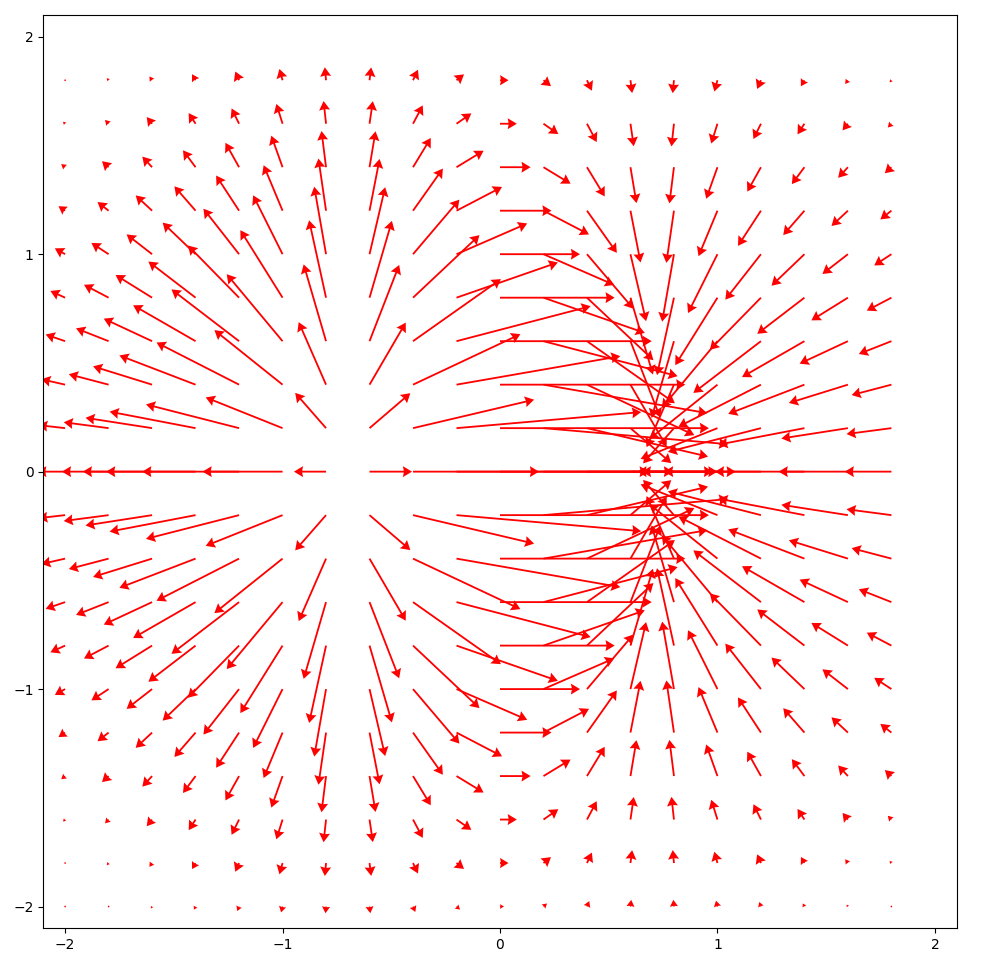}
        \caption{The gradient $J$ (to scale).}
   \end{subfigure}
   \begin{subfigure}{\columnwidth}
        \centering
        \includegraphics[width=0.775\columnwidth, keepaspectratio]{./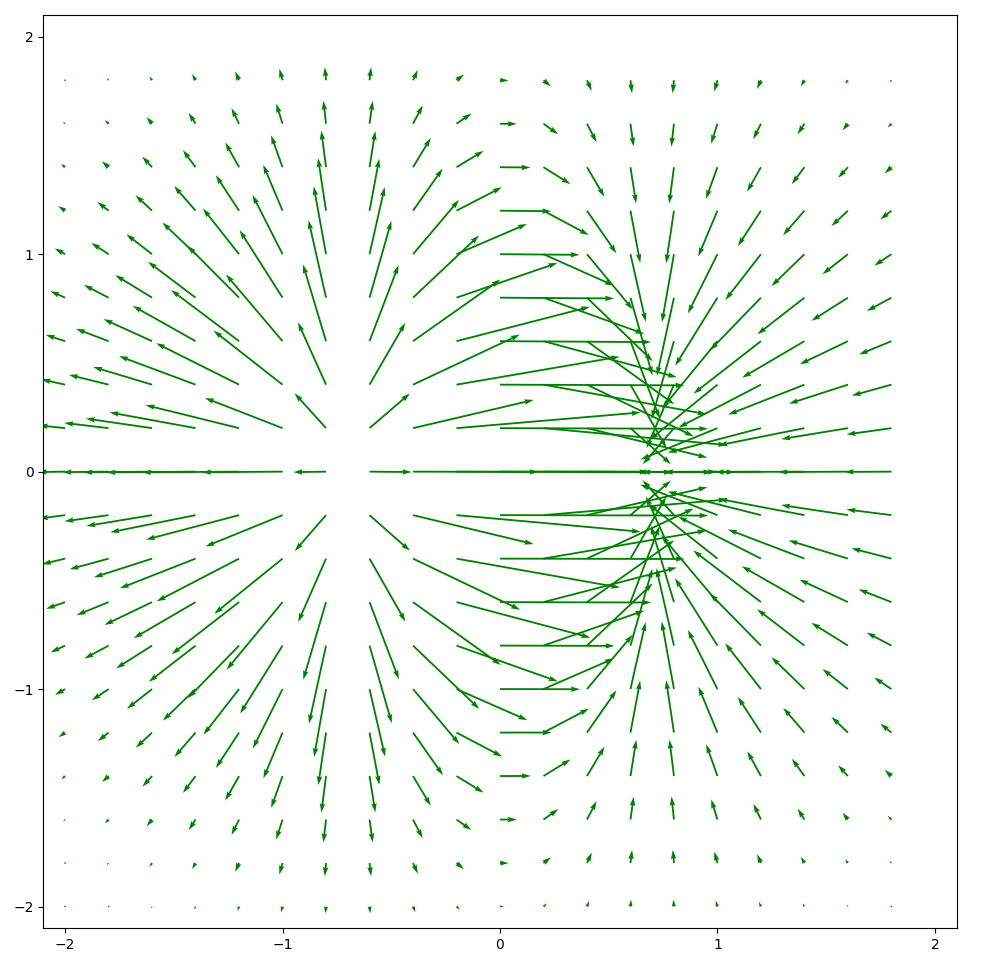}
        \caption{The estimated gradient $\widehat{J}$ (to scale).}
   \end{subfigure}
   \caption{A visual comparison of the Jacobian $J$ of $F_0$ and the trained Jacobian estimator $\widehat J$.}\label{big}
\end{figure*}

Finally, in Subsection \ref{Ss:Notes} we discuss the effects of varying certain parameters of the algorithm (namely $r_{max}$, $k_{max}$, and the geometry of the neural network), as well as the effect of adding noise to the sampling data. We also comment on the limitations of Algorithm \ref{J-alg} and on computing time.

\subsection{A visual example}\label{Ss:Big}

In all examples of Subsections \ref{Ss:Big} and \ref{Ss:All} we set the parameters of the algorithm as
\begin{itemize}
\item $r_{max}=0.5$,
\item $k_{max}=30$,
\end{itemize}
and of the neural network as
\begin{itemize}
\item a simple forward ANN,
\item with $4$ hidden layers consisting of $100$, $100$, $50$ and $20$ neurons,
\item using the swish activation function for all cells,
\item and trained with stochastic gradient descent in $50$ epochs and batch size of $50$.
\end{itemize}

Consider the function $F_0$ of Table \ref{Tb:Functions}. This function is not known to Algorithm \ref{J-alg} for the purposes of training the Jacobian matrix estimator $\widehat J$, nevertheless it is known to us and we can therefore visualize it, calculate its Jacobian matrix $J$ by standard symbolic differentiation, and visualize the Jacobian matrix as well.

The graph of $F_0$ is given in Figure \ref{big}(a). Its Jacobian matrix $J$ is a function $\R^2\to\R^2$ and we plot it as a vector field on a regular grid of $20\times 20$ points contained in $(-2,2)^2$. This is done in Figure \ref{big}(c), where the vectors are automatically scaled to improve legibility, and in Figure \ref{big}(e), where the vectors are to scale.

The input of the algorithm is a cloud of $N=10^6$ sample points $\{(x,F(x)):x\in X\}$, where the set $X\subseteq (-2,2)^2$ is generated randomly. Such a cloud is visualized in Figure \ref{big}(b), except that only $10^3$ points are plotted in the figure to improve legibility.

The resulting Jacobian matrix estimator $\widehat J$ is visualized as a vector field in Figure \ref{big}(d) automatically scaled, and in Figure \ref{big}(f) to scale.

\begin{figure}[!ht]
    \centering
    \includegraphics[width=0.8\columnwidth, keepaspectratio]{./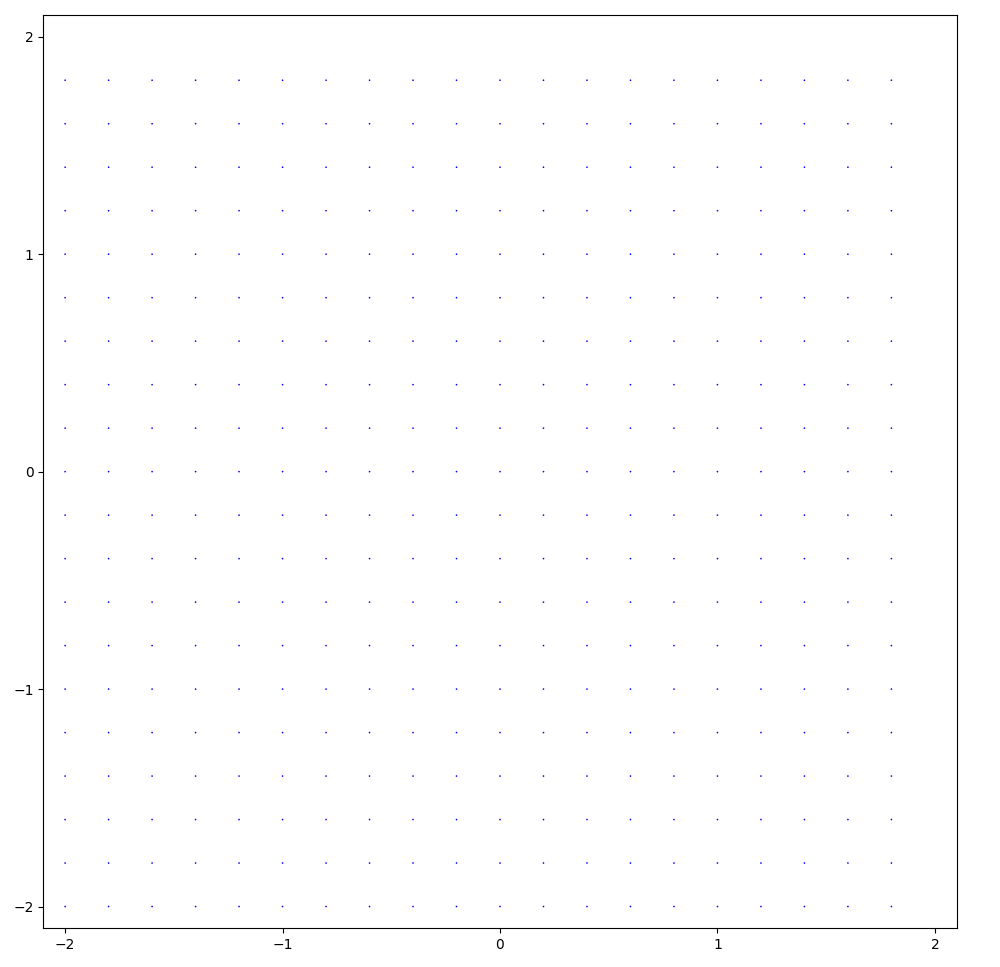}
    \caption{The difference $\widehat{J}-J$ (to scale).}\label{difference-grad}
\end{figure}

In order to compare the near-identical vector fields of $J$ and $\widehat J$, we offer Figure \ref{difference-grad} in which we plot the vector field $\widehat J-J$ to scale.

\subsection{Error statistics}\label{Ss:All}

It can be seen plainly from Figure \ref{difference-grad} that $\widehat J$ is a good estimator of $J$ for the function $F_0$. In order to quantify this fact for $F_0$ and for the other testing functions $F:\R^d\to\R^c$ from Table \ref{Tb:Functions}, we consider two validation methods.

For the first validation method, suppose that the Jacobian matrix estimator is trained for a function $F:\R^d\to\R^c$ that is not known to the algorithm but that is known to us. We can therefore compare the estimated Jacobian matrix $\widehat J$ with the actual Jacobian matrix $J$ of $F$ calculated by standard symbolic differentiation.

In more detail, let $S$ be a very large, randomly generated subset of the domain of $F$. (We used $|S|=10^6$ throughout.) For $\delta\ge 0$, let
\begin{displaymath}
    S_\delta =\{x\in S:\norm{J(x)}{\R^{cd}}>\delta\},
\end{displaymath}
where $\norm{J(x)}{\R^{cd}}$ is the Frobenius norm of the matrix $J(x)$, that is, the Euclidean norm of the vector in $\R^{cd}$ obtained by concatenating the rows of $J(x)$. Consider the error estimate
\begin{equation}\label{Eq:ErrorEstimate}
    E_\delta = \frac{1}{|S_\delta|}\sum_{x\in S_\delta}\frac{\norm{\widehat{J}(x)-J(x)}{\R^{cd}}}{\norm{J(x)}{\R^{cd}}}
\end{equation}
that averages pointwise over $S_\delta$ the size of the error $\widehat J(x)-J(x)$ relative to the size of $J(x)$. We will refer to $E_\delta$ as the \emph{average relative error} of $\widehat J$ over $S$ and report it in percents.

\begin{remark}
The average relative error $E_\delta$ is sensitive to the parameter $\delta$. By setting $\delta=0$, we exclude only those points $x\in S$ with $J(x)=0$, that is, the points of $S$ where the relative error is not defined. By setting $\delta$ to a small positive value, we also exclude the points $x\in S$ with small $J(x)$, hence in general improving the error estimate by ignoring the points where the relative error is greatly magnified by the small denominator.
\end{remark}

For the second validation method, suppose that we would like to train the Jacobian matrix estimator on a sample set $\{(x,F(x)):x\in X\}$ for a function $F:\R^d\to\R^c$ that is not known to us. We split $X$ into two disjoint subsets $X_T$ and $X_V$, a larger training set $X_T$ and a smaller validation set $X_V$. (We used $|X_V|=10^4$ except as otherwise noted.) We then train $\widehat J$ on the training set $\{(x,F(x)):x\in X_T\}$ and we calculate the error on the validation set $X_V$ as follows.

Let $X^*_V$ be the set of all pairs $(a,b)\in X_V\times X_V$ such that $b$ is a near neighbor of $a$ obtained by the same near neighbor routine that has been employed in Algorithm \ref{J-alg}. For $\delta>0$, let
\begin{displaymath}
    S^*_\delta = \{(a,b)\in X^*_V:\norm{F(b)}{\R^c}>\delta\}
\end{displaymath}
and note that $F(b)$ is known from the sample set. Instead of comparing $\widehat J$ to the unknown Jacobian matrix $J$, consider the error estimate
\begin{displaymath}
    E^*_\delta = \frac{1}{|S^*_\delta|} \sum_{(a,b)\in S^*_\delta}\frac{\norm{F(b)-(F(a)+\widehat{J}(a)(b-a))}{\R^c}}{\norm{F(b)}{\R^c}}
\end{displaymath}
based on linear approximations. The purpose of the parameter $\delta$ is the same as in the first error estimate \eqref{Eq:ErrorEstimate}. We again report $E^*_\delta$ in percents.

Table \ref{Tb:Results} summarizes the error estimates for all testing functions from Table \ref{Tb:Functions}.

\begin{table}[!ht]
\centering
\caption{The average relative error (in percents) of the Jacobian matrix estimator trained by Algorithm \ref{J-alg} for the functions from Table \ref{Tb:Functions}.}\label{Tb:Results}
\begin{tabular}{llrrrrr}
\toprule
function    & $N$        &$E_0$    & $E_{0.001}$ & $E_{0.01}$   &$E_{0.1}$ & $E^*_{0.01}$ \\
  \midrule
  $F_0$       &$10^3$   &$24.9$     & $24.9$  & $21.2$ & $11.4$ & $4.42$\\
            &$10^4$   &$5.78$      & $5.78$     &$5.22$ & $3.37$ & $2.73$ \\

            &$10^5$     &$3.11$      & $3.10$  & $2.83$  & $1.94$ & $2.69$ \\
            &$10^6$     &$1.20$      & $1.20$  & $1.11$  & $0.89$ & $2.69$  \\

  \midrule
  $F_1$       &$10^4$     & $0.57$  & $0.56$ & $0.56$    & $0.55$ & $0.70 $ \\
            &$10^5$     & $0.42$ & $0.42$  & $0.42$  & $0.40$ & $0.69$ \\
  \midrule
  $F_2$     & $10^4$ & $6.33$ & $2.72$ & $1.82$ & $1.33$ & $0.55$ \\
            & $10^5$ & $1.92$ & $1.21$ & $0.82$ & $0.54$ & $0.49$ \\
  \midrule
  $F_3$     & $10^4$ & $21.9$ & $7.25$ & $3.66$ & $1.65$ & $0.70$ \\
            & $10^5$ & $8.70$ & $1.65$ & $0.68$ & $0.30$ & $0.64$ \\
  \midrule
  $F_4$    & $10^4$ & $1.19$ & $1.19$ & $1.19$ & $1.19$ & $0.43$ \\
            & $10^5$ & $0.88$ & $0.88$ & $0.88$ & $0.88$ & $0.43$ \\
  \midrule
  $F_5$    & $10^4$ & $0.79$ & $0.79$  & $0.79$ & $0.79$ & $0.29$\\
        & $10^5$ & $0.24$ & $0.24$ & $0.24$ & $0.24$ &  $0.28$ \\
  \midrule
  $F_6$    & $10^4$ & $1.00$ & $1.00$  & $1.00$ & $1.00$ & $0.04$ \\
            & $10^5$ & $0.27$ & $0.27$ &$0.27$ & $0.27$ & $0.04$ \\
  \midrule
  $F_7$ &  $10^4$ & $6.84$ & $6.84$ & $6.84$ & $6.11$ & $1.44$ \\
            & $10^5$ & $2.92$ & $2.92$ & $2.92$ & $2.60$ & $1.40$  \\
  \midrule
  $F_8$ & $10^4$ & $4.61$ & $4.61$ & $4.60$ & $4.60$ & $3.61$ \\
            & $10^5$ & $1.90$ & $1.90$ & $1.89$ & $1.89$ & $3.60$ \\
  \midrule
  $F_9$ &$10^4$ & $1.99$ & $1.99$ & $1.99$ & $1.99$ & $0.28$ \\
            & $10^5$ & $0.95$ & $0.95$ & $0.95$ & $0.95$ & $0.28$ \\
  \midrule
  $F_{10}$ & $10^4$ & $7.75$ & $7.75$ & $7.74$ & $7.74$ & $3.91$ \\
            & $10^5$ & $3.81$ & $3.81$ & $3.81$ & $3.81$ & $0.43$ \\
            & $10^6$ & $1.99$ & $1.99$ & $1.99$ & $1.99$ & $0.43$\\
  \midrule
  $F_{11}$   & $10^4$ & $8.56$ & $8.56$ & $8.56$ & $8.56$ & $9.98$  \\
            & $10^5$ & $4.33$ & $4.33$ & $4.33$ & $4.33$ & $9.92$  \\
            & $10^6$ & $2.42$ & $2.42$ & $2.42$ & $2.42$ & $10.3$ \\
  \midrule
  $F_{12}$  & $10^5$ & $7.72$ & $7.72$ & $7.72$ & $7.52$ & $8.87$ \\
            &$10^6$ & $4.92$ & $4.92$ & $4.90$ & $4.87$ &  $8.47$\\
  \bottomrule
\end{tabular}
\end{table}

The effect of $\delta$ is most visible for functions whose Jacobian matrix is frequently close to $0$. Not surprisingly, the error estimate $E^*_\delta$ tends to be better than $E_\delta$ since both the training of $\widehat J$ and the error estimate $E^*_\delta$ are based on linear approximations, albeit at different pairs of points. Note that the error gets worse as the dimension $d$ of the domain increases.

The rather large error $E^*_{0.01}$ for $F_{11}$ and $F_{12}$ is a consequence of the large volume of the domains and small validation set $X_V$. Increasing the size of $X_V$ from $10^4$ (the default) to $10^5$ for $F_{11}$ (resp.~$F_{12}$) with $N=10^6$ improves $E^*_{0.01}$ from $10.3$ (resp.~$8.47$) to $3.92$ (resp. $3.53$).

\subsection{Parameters and limitations of the algorithm}\label{Ss:Notes}

We conclude this section with somewhat informal comments on the effects of varying the parameters and on the limitations of Algorithm \ref{J-alg}. A more thorough discussion will be reported elsewhere.

\subsubsection{Varying $k_{max}$ and $r_{max}$}

Generally speaking, the error improves with smaller values of $r_{max}$, provided that the training set is sufficiently dense so that enough neighbors can be found within radius $r_{max}$ of sample points. The effect of the parameter $k_{max}$ is more delicate. If the sample set is very dense, larger values of $k_{max}$ improve the error since more quality training pairs become available. If the sample set is sparse, larger values of $k_{max}$ (in conjunction with large values of $r_{max}$) make the error worse.

Table \ref{Tb:kr} lists observed error rates for the function $F_0$ with a sample set of size $N=10^4$ for various values of $k_{max}$ and $r_{max}$. The first line with $k_{max}=30$ and $r_{max}=0.5$ is repeated from Table \ref{Tb:Results} as a baseline; it turns out that in this particular example the pairs of nearby points are more restricted by $k_{max}=30$ than by $r_{max}=0.5$. In the second line, when $k_{max}$ is relaxed to $100$, the algorithm picks up too many distant pairs of points and the error rate gets worse. In the third line, with $k_{max}=100$ and $r_{max}=0.1$, the number of kept nearby pairs of points drops by approximately $80\%$ compared to the second line, only about $19$ nearby points are retained on average for every sample point (well below $k_{max}=100$), and the error rate improves slightly on the baseline. Finally, with $k_{max}=10$ and $r_{max}=0.5$, the number of kept training pairs drops by approximately $90\%$ compared with the second line and the error rate improves further.

\begin{table}[!ht]
\centering
\caption{The effect of $k_{max}$ and $r_{max}$ for $F_0$ and $N=10^4$.}\label{Tb:kr}
\begin{tabular}{rrrrrrr}
\toprule
$k_{max}$   & $r_{max}$     & $E_0$     & $E_{0.001}$   & $E_{0.01}$    &$E_{0.1}$  & $E^*_{0.01}$ \\
\midrule
$30$        & $0.5$         &$5.78$     & $5.78$        & $5.22$        & $3.37$    & $2.73$\\
$100$       & $0.5$         &$10.4$     &$9.54$         & $8.77$        &$5.35$     & $5.52$\\
$100$       & $0.1$         &$5.30$     &$5.30$         & $4.73$        &$2.92$     & $1.14$\\
$10$        & $0.5$         &$4.97$     &$4.96$         & $4.45$        &$2.92$     & $0.61$\\
\bottomrule
\end{tabular}
\end{table}

\subsubsection{Varying the geometry of the neural network}

We do not understand well the effect of the geometry of the neural network on Algorithm \ref{J-alg}. Table \ref{Tb:Geom} reports the error rate $E_{0.1}$ for the function $F_0$, sample set of size $N=10^4$ and the default parameters $k_{max}=30$ and $r_{max}=0.5$. The first line of Table \ref{Tb:Geom} is again taken from Table \ref{Tb:Results} as a baseline.

\begin{table}[!ht]
\centering
\caption{The effect of neural net geometry for $F_0$ and $N=10^4$.}\label{Tb:Geom}
\begin{tabular}{lr}
\toprule
layers and their size       & $E_{0.1}$ \\
\midrule
100, 100, 50, 20            &$3.37$\\
1000, 100, 50, 20           &$4.48$\\
1000, 1000, 50, 20          &$3.62$\\
100, 1000, 100              &$2.83$\\
50, 50                      &$2.23$\\
8 layers of 20 neurons      &$3.27$\\
16 layers of 20 neurons     &$5.50$\\
\bottomrule
\end{tabular}
\end{table}

\subsubsection{Non-differentiable functions and singularities}

Although we have assumed throughout that the function $F$ is differentiable for the purposes of being able to compare its Jacobian matrix $J$ with the Jacobian estimator $\widehat J$, Algorithm \ref{J-alg} will happily train $\widehat J$ from a sample set $\{(x,F(x)):x\in X\}$ for any function $F$, differentiable or not. In fact, $F$ need not be even defined outside of $X$.

It is to be expected that $\widehat J$ will not be a good approximation of $J$ if $J$ oscillates wildly relative to the density of the sample set or if it attains a very large range of values, for example due to the presence of a singularity inside or just outside of the considered domain.

Concerning singularities, we observed the following results for $\widehat J$ trained on a sample set with $N=10^4$ points. For $F(x,y) = (x+y)^{1/2}$ on $(0,1)^2$, the Jacobian matrix has a singularity at $(0,0)$, that is, at a ``corner'' of the domain of $F$, nevertheless the error estimates are very satisfactory: $E_0 = 0.94$ and $E^*_{0.01}=0.02$. For $F(x,y)=\sqrt{x^2+y^2}$ on $(-1,1)^2$, there is a $J$-singularity at $(0,0)$, that is, inside the domain of $F$, and the error estimates are $E_{0.1}=23.6$ and $E^*_{0.01}=0.3$. Finally, the function $F(x,y)=|x|+|y|$ is not differentiable along the coordinate axes, but on $(-1,1)^2$ we still get $E_0=3.9$ and $E^*_{0.01}=0.4$.

The low values of $E^*_\delta$ indicate that $\widehat J$ does not seem to be capable of detecting/suggesting singularities for truly unknown functions (for which the $E_\delta$ error estimates are not available).

\subsubsection{Convexity}

If the function $F$ is scalar valued, then convexity of the function $F$ has an effect on the error estimate. If the function $F$ is convex (resp.~concave) near $a$, the estimator $\widehat J$ tends to return a value $\widehat J(a)$ that is smaller (resp.~larger) in norm than it should be. To see this, consider the concave function depicted in Figure \ref{Fig1D} and suppose that $\widehat J(a)$ is such that the green secant line intersects the line $x=b$ somewhere between the red tangent line and the blue graph of $F$. Then the value $\widehat J(a)$ should be increased in order to get closer to the tangent line, but the still positive loss function will have the opposite effect.

\subsubsection{Noisy sampling data}

Algorithm \ref{J-alg} may be used on noisy training data. By training $\widehat J$ on the sample set $\{(x,F(x)+\epsilon_x):x\in X\}$, where $\{\epsilon_x:x\in X\}$ is a family of independent Gaussian random variables with mean $0$ and standard deviation $0.01$, we observed that $\widehat J$ still approximates $J$ rather well, but certainly not as well as in noiseless situations. For instance, for the function $F_1$ of Table \ref{Tb:Functions} and with $N=10^5$ sample points we observed the average relative error of $E_{0.01}=8.06$ percent, for $F_4$ and $N=10^4$ we observed $E_{0.01}=9.74$, and for $F_8$ and $N=10^5$ we observed $E_{0.01}=10.5$. The negative effect of noise on regression in general was discussed in the introduction and it persists in the context of Algorithm \ref{J-alg}.

\subsubsection{The running time}

The running time of the algorithm increases with the size $N$ of the sample set and with the parameter $k_{max}$. Using $k_{max}=50$, the observed running time of Algorithm \ref{J-alg} on a PC with Intel Core i7 9th generation 3GHz processor was $0.6$ sec/epoch for $N=10^3$, $4$ sec/epoch for $N=10^4$, $40$ sec/epoch for $N=10^5$ and $250$ sec/epoch for $N=10^6$.

\section{Convergence of the Jacobian matrix estimator: A formal proof}\label{Sc:Math}

In this section we prove that under reasonable assumptions on the function $F$, the neural network, the training set and the outcome of the training in Algorihtm \ref{J-alg}, the resulting Jacobian matrix estimator $\widehat{J}$ converges in norm to the Jacobian matrix $J$ of $F$.

We will be more careful with vectors from now on and write them as column vectors. As above, the Euclidean norm of $x=(x_1,\dots,x_d)^\top\in\R^d$ will be denoted by
\begin{displaymath}
    \norm{x}{\R^d} = \left(\sum_{i=1}^d x_i^2\right)^{1/2}.
\end{displaymath}
The dot product of $x$, $y\in \R^d$ will be denoted by
\begin{displaymath}
    \langle x,y\rangle = \sum_{i=1}^d x_iy_i,
\end{displaymath}
so that $\norm{x}{\R^d} = \langle x,x\rangle^{1/2}$. Finally, for an $n\times m$ matrix $A$, the operator norm will be denoted by
\begin{displaymath}
    \norm{A}{n\times m} = \mathrm{sup}\{\norm{Ax}{\R_n}:x\in \R^m,\,\norm{x}{\R^m}\le 1\}.
\end{displaymath}

\subsection{The assumptions}

Let us fix $\varepsilon>0$ throughout. The first assumption states that the second partial derivatives of $F$ are bounded:

\begin{assumption}\label{fn-ass}
Let $U$ be an open bounded subset of $\R^d$. Let $F=(F_1,\dots,F_c):U\to\R^c$ be a twice differentiable function and let $H=(H_i)_{i=1}^c$,
\begin{displaymath}
    H_i=\left(\frac{\partial^2 F_i}{\partial x_j\partial x_k}\right)_{j,k}.
\end{displaymath}
There is a constant $L>0$ such that $\norm{H_i}{d\times d}\le L$ for every $1\le i\le c$.
\end{assumption}

The second assumption states that the neural network trained by Algorithm \ref{J-alg} is Lipschitz. This can be achieved by fixing the geometry of the neural network (number of layers and neurons in each layer), by using activation functions that are Lipschitz, and by bounding above the weights of the neural network, for instance.

\begin{assumption}\label{net-ass}
There is a constant $L'>0$ such that every neural network $\widehat J$ trained by Algorithm \ref{J-alg} satisfies
\begin{displaymath}
    \norm{\widehat{J}(y)-\widehat{J}(x)}{c\times d}\le L'\norm{y-x}{\R^d}
\end{displaymath}
for every $x,y\in U$.
\end{assumption}

The third assumption states, roughly speaking, that the (domain of) the sample set is sufficiently dense and that near every sample point we can find $d$ additional sample points such that any two of the $d$ resulting vectors are close to being orthogonal.

\begin{assumption}\label{set-ass}
Let $\{(x,F(x)):x\in X\}$ be the sample set. Then:
\begin{enumerate}
\item[(i)] $X$ is $\varepsilon$-dense in $U$, that is, for every $y\in U$ there is $x\in X$ such that $\norm{x-y}{\R^d}\le\varepsilon$,
\item[(ii)] there exist a constant $0<\alpha<1$ and a constant $R>0$ such that for all $x \in X$, if $U_x$ is the set of the $k_{max}$ nearest neighbors of $x$ within distance $R\varepsilon$ of $x$, then there are points $u_{x,1},\dots,u_{x,d} \in U_x$ such that
\begin{displaymath}
    \frac{\left| \inner{u_{x,i}-x}{u_{x,j}-x}{}  \right|}{\norm{u_{x,i}-x}{\R^d} \norm{u_{x,j}-x}{\R^d}} \leq \frac{\alpha}{d}
\end{displaymath}
for every $1\le i<j\le d$.
\end{enumerate}
\end{assumption}

\begin{remark}
The constant $R$ in Assumption \ref{set-ass} can be shown to always exist, for any $\alpha\in(0,1)$ and $d$, as long as $U$ has large enough diameter and $k_{max}=|X|$. However, in general, $k_{max}$ is much smaller than $|X|$, so Assumption \ref{set-ass}(ii) is not implied by (i) in general.
\end{remark}

The final assumption states that Algorithm \ref{J-alg} produces a neural network for which the loss (used in training) is under control. This should be seen as a relative assumption on the capability of neural networks.

\begin{assumption}\label{training-ass}
After training $\widehat J$ by Algorithm \ref{J-alg} with $r_{max}\geq R\varepsilon$, we have
\begin{displaymath}
    \frac{\norm{F(y)-F(x)-\widehat{J}(x)(y-x)}{\R^c}}{\norm{y-x}{\R^d}}\le\varepsilon
\end{displaymath}
for every $x\in X$ and $y\in U_x$, where $X$ and $U_x$ are as in Assumption \ref{set-ass}.
\end{assumption}

\subsection{A proof of convergence}

We will need the following result in the proof of the main theorem.

\begin{lemma}\label{Lm:Coeffs}
Let $0<\alpha<1$ and let $B=\{b_1,\dots,b_d\}$ be a set of unit vectors in $\R^d$ such that $|\langle b_i,b_j\rangle|\le \alpha/d$ for every $1\le i<j\le d$. Then $B$ is a basis of $\R^d$ and if $y=\sum_{i=1}^d y_ib_i$ is any vector with $\norm{y}{\R_d}\le 1$ then
\begin{equation}\label{Eq:CoeffsBounded}
    |y_i|\le (1-\alpha)^{-1/2}
\end{equation}
for every $1\le i\le d$.
\end{lemma}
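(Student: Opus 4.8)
The plan is to route everything through the Gram matrix of $B$. Let $G$ be the $d\times d$ matrix with entries $G_{ij}=\langle b_i,b_j\rangle$. Since the $b_i$ are unit vectors, $G$ has $1$'s on the diagonal, and by hypothesis its off-diagonal entries satisfy $|G_{ij}|\le\alpha/d$. Writing $G=I+E$, where $E$ has zero diagonal and $|E_{ij}|\le\alpha/d$ for $i\ne j$, and denoting by $\mathbf{y}=(y_1,\dots,y_d)^\top$ the tuple of coefficients of $y=\sum_{i=1}^d y_ib_i$, I would start from the identity
\begin{displaymath}
    \norm{y}{\R^d}^2=\mathbf{y}^\top G\,\mathbf{y}=\sum_{i=1}^d y_i^2+\mathbf{y}^\top E\,\mathbf{y}.
\end{displaymath}
Both conclusions of the lemma will follow from a single estimate on the cross term $\mathbf{y}^\top E\,\mathbf{y}$.

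The key computation is to show $|\mathbf{y}^\top E\,\mathbf{y}|\le\alpha\sum_i y_i^2$. First bound $|\mathbf{y}^\top E\,\mathbf{y}|\le\frac{\alpha}{d}\sum_{i\ne j}|y_i||y_j|\le\frac{\alpha}{d}\bigl(\sum_i|y_i|\bigr)^2$, and then apply Cauchy--Schwarz in the form $\bigl(\sum_i|y_i|\bigr)^2\le d\sum_i y_i^2$. The factor $d$ cancels the $1/d$ exactly, leaving $|\mathbf{y}^\top E\,\mathbf{y}|\le\alpha\sum_i y_i^2$. Substituting into the identity above gives the sandwich $\norm{y}{\R^d}^2\ge(1-\alpha)\sum_i y_i^2$, which is the heart of the lemma.

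The two conclusions then drop out immediately. Because $1-\alpha>0$, the inequality shows $G$ is positive definite, hence invertible; equivalently, $y=0$ forces $\sum_i y_i^2=0$, so the $b_i$ are linearly independent, and being $d$ vectors in $\R^d$ they form a basis. For the coefficient bound, when $\norm{y}{\R^d}\le1$ the sandwich yields $\sum_i y_i^2\le(1-\alpha)^{-1}$, and since $y_i^2\le\sum_i y_i^2$ for each fixed $i$, we obtain $|y_i|\le(1-\alpha)^{-1/2}$, which is exactly \eqref{Eq:CoeffsBounded}.

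The main (indeed the only nontrivial) obstacle is getting the constant right in the cross-term estimate: one must recognize that the worst case of $\sum_{i\ne j}|y_i||y_j|$ is controlled by $\bigl(\sum_i|y_i|\bigr)^2$ and that the dimension-$d$ Cauchy--Schwarz bound absorbs the $1/d$ factor precisely. The hypothesis $|\langle b_i,b_j\rangle|\le\alpha/d$ is calibrated so that the total off-diagonal mass is at most $\alpha$; a cruder bound on $E$ (say through its operator norm) would lose this sharp constant. One should also keep the notation honest by distinguishing $\norm{y}{\R^d}$, the genuine norm of the vector $y$, from $\bigl(\sum_i y_i^2\bigr)^{1/2}$, the norm of its coefficient tuple relative to $B$; the lemma is precisely the assertion that these two quantities are comparable, with the near-orthogonality of $B$ controlling the distortion by $(1-\alpha)^{-1/2}$.
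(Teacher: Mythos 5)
Your proof is correct, but it takes a genuinely different and more elementary route than the paper's. Both arguments hinge on the Gram matrix of $B$ and on the calibration $|\langle b_i,b_j\rangle|\le\alpha/d$, and both in effect show that the smallest eigenvalue of the Gram matrix is at least $1-\alpha$; the difference lies in the machinery. The paper writes $G=I+M$ (your $E$ is its $M$), bounds the operator norm $\norm{M}{d\times d}\le\alpha$ from the entrywise bound, invokes the Neumann series to get $\norm{G^{-1}}{d\times d}\le(1-\alpha)^{-1}$, then the spectral mapping theorem to get $\norm{G^{-1/2}}{d\times d}\le(1-\alpha)^{-1/2}$, and finally orthonormalizes the system by forming $BG^{-1/2}$, reading off the coefficient bound from the identity $(y_1,\dots,y_d)^\top=G^{-1/2}z$ with $\norm{z}{\R^d}=\norm{y}{\R^d}$. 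You instead work with the quadratic form directly: the identity $\norm{y}{\R^d}^2=\mathbf{y}^\top G\,\mathbf{y}=\sum_i y_i^2+\mathbf{y}^\top E\,\mathbf{y}$, combined with the estimate $|\mathbf{y}^\top E\,\mathbf{y}|\le\frac{\alpha}{d}\bigl(\sum_i|y_i|\bigr)^2\le\alpha\sum_i y_i^2$ via Cauchy--Schwarz, gives the lower bound $\norm{y}{\R^d}^2\ge(1-\alpha)\sum_i y_i^2$ in one stroke, from which both linear independence (hence the basis claim) and the coefficient bound follow immediately. Your route needs nothing beyond Cauchy--Schwarz---no Neumann series, no matrix square roots, no spectral theory---and yields the identical constant; the paper's route is heavier, but the symmetric orthonormalization $BG^{-1/2}$ it constructs is an explicit, reusable object in its own right. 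One small correction to your closing commentary: an operator-norm bound on the perturbation does \emph{not} lose the sharp constant---the paper itself bounds $\norm{M}{d\times d}\le\alpha$ (operator norm at most Frobenius norm, which the entrywise bound controls by $\alpha$), and this delivers exactly the same $(1-\alpha)^{-1/2}$.
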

\begin{proof}
Let us view $B$ as a matrix with columns $b_1,\dots,b_d$ and let $G=B^\top B$ be the (symmetric) Gram matrix of $B$. Since all the vectors $b_i$ are of unit length and $|\langle b_i,b_j\rangle|\le\alpha/d$ for every $1\le i<j\le d$, we have $G = I + M$ for some matrix $M$ such that the absolute value of every entry of $M$ is at most $\alpha/d$. Hence $\norm{M}{d\times d} \le \alpha<1$, $G$ is invertible and thus also $B$ is invertible.

Let $y$ be a vector in $\R^d$ such that $\norm{y}{\R^d}\le 1$. Since $B$ is a basis, we can write $y=\sum_{i=1}^d y_ib_i$, i.e., $y = B(y_1,\dots,y_d)^\top$.

Recall the Neumann series \cite[VII, Corollary 2.3]{Conway}
\begin{displaymath}
    (I-T)^{-1} = \sum_{k=1}^\infty T^k
\end{displaymath}
for a bounded linear operator $T$. Since $G=I+M=I-(-M)$ and $\norm{M}{d\times d}=\norm{-M}{d\times d}$, we deduce
\begin{displaymath}
    \norm{G^{-1}}{d\times d} \leq \sum_{k=0}^\infty (\norm{M}{d\times d})^k \le\sum_{k=0}^\infty\alpha^k = (1-\alpha)^{-1}.
\end{displaymath}
Then
\begin{equation}\label{Eq:Aux}
    \norm{G^{-1/2}}{d\times d}\le (1-\alpha)^{-1/2}
\end{equation}
by the spectral mapping theorem \cite{Conway}, as $G$ is positive. (This can also be seen by noting that $G^{-1}$ is symmetric, so its operator norm is the absolute value of its largest eigenvalue, whose square root is then the largest eigenvalue, hence the norm, of the symmetric matrix $G^{-1/2}$.)

Let $E = B G^{-1/2}$ and note that we have $E^\top E = (BG^{-1/2})^\top BG^{-1/2} = G^{-1/2}B^\top B G^{-1/2} = G^{-1/2}GG^{-1/2}= I$. Hence the columns $e_1,\ldots,e_d$ of $E$ form an orthonormal basis for $\R^d$.

Consider the vector
\begin{displaymath}
    z = (\inner{e_1}{y}{},\dots,\inner{e_d}{y}{})^\top.
\end{displaymath}
Since $E$ is an orthonormal basis, we have $\norm{z}{\R^d} = \norm{y}{\R^d}\le 1$ and $y = Ez = BG^{-1/2}z$. Recalling $y=B(y_1,\dots,y_d)^\top$, we deduce $(y_1,\dots,y_d)^\top = G^{-1/2}z$. Therefore
\begin{displaymath}
    \norm{(y_1,\dots,y_d)}{\R^d}\le \norm{G^{-1/2}}{d\times d}\norm{z}{\R^d}\le (1-\alpha)^{-1/2}
\end{displaymath}
thanks to \eqref{Eq:Aux} and $\norm{z}{\R_d}\le 1$. This means that
\begin{displaymath}
    \sum_{i=1}^d y_i^2\le (1-\alpha)^{-1}
\end{displaymath}
and \eqref{Eq:CoeffsBounded} follows.
\end{proof}

We are now ready to state an prove the main result.

\begin{theorem}\label{Th:Main}
Let $\varepsilon>0$, $U\subseteq\R^d$, $F:U\to\R^c$ and let $J$ be the Jacobian matrix of $F$. Suppose that Assumptions \ref{fn-ass}--\ref{training-ass} are satisfied for $F$ and for the trained Jacobian matrix estimator $\widehat J$. Then
\begin{displaymath}
    \sup_{x\in U}\norm{\widehat{J}(x)-J(x)}{c\times d}\le C\varepsilon,
\end{displaymath}
where
\begin{displaymath}
    C=(L+L')+\frac{d}{(1-\alpha)^{1/2}}\left(1+\frac{L R}{2}\right).
\end{displaymath}
\end{theorem}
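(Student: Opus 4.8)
The plan is to prove the bound in two stages: first at the sample points $x\in X$, where Assumption \ref{training-ass} gives direct control, and then at an arbitrary $x\in U$ by a density-plus-Lipschitz argument. I note at the outset that Assumption \ref{fn-ass} makes $J$ Lipschitz: since each Hessian $H_i$ has operator norm at most $L$, the mean value theorem shows each gradient $\nabla F_i$ is $L$-Lipschitz, so that $\norm{J(y)-J(x)}{c\times d}\le L\norm{y-x}{\R^d}$; likewise Assumption \ref{net-ass} makes $\widehat J$ $L'$-Lipschitz. These two facts will supply the $(L+L')$ part of the constant.

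For the first stage, fix $x\in X$ and write $E=\widehat J(x)-J(x)$. Let $h_i=u_{x,i}-x$ and $b_i=h_i/\norm{h_i}{\R^d}$ for the $d$ near-orthogonal neighbors supplied by Assumption \ref{set-ass}(ii); since $u_{x,i}\in U_x$ we have $\norm{h_i}{\R^d}\le R\varepsilon$. The key pointwise estimate is a bound on $\norm{Eb_i}{\R^c}$. I would split
\[
Eh_i=\bigl(\widehat J(x)h_i-(F(x+h_i)-F(x))\bigr)+\bigl((F(x+h_i)-F(x))-J(x)h_i\bigr),
\]
bound the first bracket by $\varepsilon\norm{h_i}{\R^d}$ via Assumption \ref{training-ass} (legitimate since $u_{x,i}\in U_x$ and $r_{max}\ge R\varepsilon$), and bound the second by $\tfrac12 L\norm{h_i}{\R^d}^2$ via a second-order Taylor expansion using the Hessian bound of Assumption \ref{fn-ass}. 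Dividing by $\norm{h_i}{\R^d}$ and using $\norm{h_i}{\R^d}\le R\varepsilon$ gives $\norm{Eb_i}{\R^c}\le\varepsilon(1+\tfrac{LR}{2})$.

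The crux, and the step I expect to be the main obstacle, is converting these $d$ directional bounds into a bound on the full operator norm $\norm{E}{c\times d}$, that is, obtaining control in every direction from control along only $d$ special directions. This is exactly what the near-orthogonality of the frame buys us through Lemma \ref{Lm:Coeffs}. Since $|\langle b_i,b_j\rangle|\le\alpha/d$, the $b_i$ form a basis, and any unit vector $v$ expands as $v=\sum_{i=1}^d v_ib_i$ with $|v_i|\le(1-\alpha)^{-1/2}$. Hence
\[
\norm{Ev}{\R^c}\le\sum_{i=1}^d |v_i|\,\norm{Eb_i}{\R^c}\le\frac{d}{(1-\alpha)^{1/2}}\Bigl(1+\tfrac{LR}{2}\Bigr)\varepsilon,
\]
which is the same bound for $\norm{E}{c\times d}$, uniformly over $x\in X$. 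Without Assumption \ref{set-ass}(ii) the coordinates $v_i$ could blow up and this passage would fail, so the orthogonality condition is precisely what powers the argument.

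Finally, for arbitrary $x\in U$, I would choose by $\varepsilon$-density (Assumption \ref{set-ass}(i)) a sample point $x_0\in X$ with $\norm{x-x_0}{\R^d}\le\varepsilon$ and apply the triangle inequality
\[
\norm{\widehat J(x)-J(x)}{c\times d}\le\norm{\widehat J(x)-\widehat J(x_0)}{c\times d}+\norm{\widehat J(x_0)-J(x_0)}{c\times d}+\norm{J(x_0)-J(x)}{c\times d}.
\]
The outer two terms are at most $L'\varepsilon$ and $L\varepsilon$ by the Lipschitz bounds, the middle term is the sample-point estimate just proved, and summing gives exactly $C\varepsilon$ with $C=(L+L')+\frac{d}{(1-\alpha)^{1/2}}(1+\tfrac{LR}{2})$. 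The one point I would double-check against the stated constant is the dependence on the codomain dimension $c$: the cleanest bookkeeping keeps it absorbed into $L$, whereas tracking the Euclidean norm over the $c$ components explicitly would insert a factor $\sqrt{c}$ into the Taylor and Lipschitz steps.
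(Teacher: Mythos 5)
Your proof is correct and follows essentially the same route as the paper's: the same splitting of $(\widehat J(x_0)-J(x_0))h_i$ into a training-loss term (Assumption \ref{training-ass}) and a Taylor-remainder term (Assumption \ref{fn-ass}), the same use of Lemma \ref{Lm:Coeffs} to pass from the $d$ near-orthogonal directions to the full operator norm, and the same density-plus-Lipschitz transfer from $x_0\in X$ to arbitrary $x\in U$; the paper merely performs the transfer at the level of a fixed unit vector $y$ rather than in operator norm, which is an immaterial reordering. Your closing caveat about the codomain dimension is well taken: since Assumption \ref{fn-ass} bounds each component Hessian $H_i$ separately, a component-wise estimate of the Taylor remainder and of the Lipschitz constant of $J$ in the Euclidean/operator norms does produce a factor $\sqrt{c}$ that the paper's proof silently absorbs into $L$, so the paper's stated constant is accurate only under the slightly stronger reading of $L$ as a bound on the full second derivative.
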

\begin{proof}
Since the Hessian of $F$ is bounded above by $L$ by Assumption \ref{fn-ass}, a higher order Taylor expansion for $F$ yields
\begin{displaymath}
  \norm{F(y)-F(x)-J(x)(y-x)}{\R^c} \leq \frac{L}{2} \norm{x-y}{\R^d}^2
\end{displaymath}
for all $x,y \in U$. Using this inequality, Assumption \ref{training-ass} and the triangular inequality, we have
\begin{align}
    &\norm{(\widehat{J}(x)-J(x))(y-x)}{\R^c}\notag \\
    &\quad\leq\norm{\widehat{J}(x)(y-x) - (F(y)-F(x))}{\R^c} \notag\\
    &\quad\quad+\norm{F(y)-F(x)-J(x)(y-x)}{\R^c} \label{Eq:Error} \\
    &\quad\leq\varepsilon\norm{y-x}{\R^d}+\frac{L}{2}\norm{y-x}{\R^d}^2.\notag
\end{align}
Assumption \ref{fn-ass} implies that $J$ is $L$-Lipschitz, i.e.,
\begin{displaymath}
  \norm{J(x)-J(y)}{c\times d} \leq L \norm{x-y}{\R^d}
\end{displaymath}
for every $x,y\in U$. By Assumption \ref{net-ass}, $\widehat{J}$ is $L'$-Lipschitz. Therefore
\begin{align}
    &\norm{(\widehat{J}(x)-J(x))y}{\R^c}\notag\\
    &\quad\leq\norm{(\widehat{J}(x_0)-J(x_0))y}{\R^c}\label{Eq:Error2}\\
    &\quad\quad+(L'+L)\norm{x_0-x}{\R^d}\norm{y}{\R^d}\notag
\end{align}
for every $x,y,x_0\in U$.

For the rest of the proof, fix $x\in U$ and let $y$ be any vector with $\norm{y}{\R^d}\le 1$. By Assumption \ref{training-ass}(i), there exists $x_0\in X$ such that $\norm{x-x_0}{\R^d}\le\varepsilon$. By Assumption \ref{training-ass}(ii), there exist points $u_i = u_{x_0,i}\in X$ within $R\varepsilon$ of $x_0$ such that the unit vectors
\begin{displaymath}
    b_i=\frac{u_i-x_0}{\norm{u_i-x_0}{\R^d}}
\end{displaymath}
satisfy $|\inner{b_i}{b_j}{}|\le \alpha/d$ for every $1\le i<j\le d$. By Lemma \ref{Lm:Coeffs}, $B=\{b_1,\dots,b_d\}$ is a basis of $\R^d$ and $y = \sum_{i=1}^d y_ib_i$ for some $y_1,\dots,y_d\in\mathbb R$ such that $|y_i|\le (1-\alpha)^{-1/2}$ for every $1\le i\le d$.

Using \eqref{Eq:Error} in the first inequality below,
\begin{align*}
    &\norm{(\widehat{J}(x_0)-J(x_0))b_i}{\R^c}\\
    &\quad=\frac{\norm{(\widehat{J}(x_0)-J(x_0))(u_i-x_0)}{\R^c}}{\norm{u_i-x_0}{\R^d}}\\
    &\quad\le\frac{\varepsilon\norm{u_i-x_0}{\R^d}+\frac{L}{2}\norm{u_i-x_0}{\R^d}^2}{\norm{u_i-x_0}{\R^d}}\\
    &\quad=\varepsilon + \frac{L}{2}\norm{u_i-x_0}{\R^c} \le \varepsilon + \frac{L R \varepsilon}{2} = \varepsilon\left(1 + \frac{L R}{2}\right).
\end{align*}
We therefore have
\begin{align*}
    &\norm{(\widehat{J}(x_0)-J(x_0))y}{\R^c}\\
    &\quad=\norm{(\widehat{J}(x_0)-J(x_0))\sum_{i=1}^d y_ib_i}{\R^c}\\
    &\quad\le\sum_{i=1}^d|y_i|\norm{(\widehat{J}(x_0)-J(x_0))b_i}{\R^c}\\
    &\quad\le\sum_{i=1}^d(1-\alpha)^{-1/2}\norm{(\widehat{J}(x_0)-J(x_0))b_i}{\R^c}\\
    &\quad\le \varepsilon \frac{d}{(1-\alpha)^{1/2}}\left(1 + \frac{L R}{2}\right).
\end{align*}
The inequality \eqref{Eq:Error2} then yields
\begin{align*}
    &\norm{(\widehat{J}(x)-J(x))y}{\R^c}\\
    &\quad\le\norm{(\widehat{J}(x_0)-J(x_0))y}{\R^c} +\varepsilon(L'+L)\\
    &\quad\le\varepsilon\left((L+L')+\frac{d}{(1-\alpha)^{1/2}}\left(1+\frac{L R}{2}\right)\right),
\end{align*}
finishing the proof.
\end{proof}

The estimate in Theorem \ref{Th:Main} does not depend on the dimension $c$ of the codomain of $F$, which reflects the fact that computing the gradient of each coordinate of $F$ does not affect the computation of the gradient of the other coordinates. Of course, the estimate gets worse as the dimension $d$ of the domain grows larger.

\section{Conclusion and future work}

We introduced a novel algorithm for the estimation of the Jacobian matrix of an unknown, sampled multivariable function by means of neural networks. The main ideas of the algorithm are a loss function based on a linear approximation and a nearest neighbor search in the sample data. The algorithm was tested on a variety of functions and for various sizes of sample sets. The typical average relative error is on the order of single percents, using both an error estimate for functions with a known Jacobian matrix and an error estimate for unknown functions based on linear approximations. We proved that the estimated Jacobian matrix converges to the Jacobian matrix under reasonable assumptions on the function, the sampling set and the loss function.

In future work, we will apply Algorithm \ref{J-alg} for validation of physics-informed models, anomaly detection and time series analysis. For physics-informed models, a typical restriction is of the form $\partial F/\partial x>0$, which can be verified or refuted by the Jacobian matrix estimator $\widehat J$ trained by Algorithm \ref{J-alg}. In anomaly detection, the Jacobian estimator can be retrained periodically on a window of data and deviations in the values of $\widehat J$ can be statistically detected (and, in addition, the reason for the anomaly can be narrowed down by focusing on anomalous values $\partial F_i/\partial x_j$). In a time series, the time parameter can be treated as another variable (preferably modulo a fixed period of time to allow for nearby points in the sample set) or the time parameter can be suppressed by turning the time series $S(t_1)=(a_1,\dots,a_n)$, $S(t_2)=(b_1,\dots,b_n)$, etc, into a dynamical system $F(a_1,\dots,a_n)=(b_1,\dots,b_n)$, etc.

\section*{Acknowledgement}

The authors acknowledge support from a Lockheed Martin Space Engineering and Technology grant ``Time series analysis.''


\newpage

\section*{Appendix: The code for JacobianEstimator}

\begin{lstlisting}[language=Python,basicstyle=\tiny, keywordstyle=\color{red}, numberstyle=\color{pink}, stringstyle=\color{purple}, identifierstyle=\color{blue} , commentstyle=\color{black}]

# Imported modules

import math
import numpy as np
from sklearn.neighbors import NearestNeighbors
import tensorflow as tf
from tensorflow import keras
from keras.models import Sequential
from keras.layers import Dense
from keras.optimizers import adam_v2
from tensorflow.keras.constraints import max_norm

################################
## Definition of the class  Jhat
##
## Purpose:
## Estimate the Jacobian of F : I -> O from a finite sample of values (x,F(x)).
## The function F is considered unknown outside of the sample set.
## I is a subset of R^domdim and O is a subset of R^codomdim.
##
## Usage:
## 1. Create a Jhat object. This process does not involve the data set,
##    but it does involve the dimensions of the data set. This process creates
##    a simple ANN using Dense layers.
## 2. Fit the Jhat object to the sample data. The data comes as an array of
##    input values (x) and a corresponding array of output values (fx).
##    Formally, x and fx have respective "shapes" (N,domdim) and (N,codomdim),
##    where N is the number of sample points. We understand df[j] as F(x[j]).
## 3. Use the functions predict1 (for one input value) or predict (for a list
##    of input values) to obtain the estimate for the Jacobian of F.
##
## Syntax:
##
## constructor:
## Jhat(layers,domdim,codomdim,nbr,r_max,batch_size,epochs,learning_rate,verbose)
##      - layers (opt): an array of integers with the number of neurons per layer.
##          The resulting ANN will have two extra layers (input and output).
##          Default value is [100,100,50,20].
##      - domdim (opt): dimension of the domain of F. Default is 2.
##      - codomdim (opt): dimension of the codomain of F. Default to 1.
##      - nbr (opt): max number of nearest neighbors to use for the estimation.
##          Default is twice domdim. Should be at least domdim.
##      - r_max: if not 0, only keeps neighbors within r_max. Default is 0.
##      - batch_size (opt): batch size for the gradient method. Default to 50.
##      - epochs (opt): number of iterations over training set. Default is 50.
##      - learning_rate (opt): learning rate for the gradient method.
##      - max_w (opt): maximum norm of the weights (no constraint if 0)
##      - verbose (opt): text notification during training. Default is True.
##
## fit: fit(self,x,fx)
##      - x: array of N input values, each an array of domdim numbers
##      - fx: array of N output values, each an array of codomdim numbers
## returns self
##
## predict1: predict1(self,x)
##      - x: an array of domdim numbers
## returns the predicted value of the Jacobian at x
##
## predict: predict(self,x)
##      - x: an array of N input values, each an array of domdim numbers
## returns array of predicted values of the Jacobian at each input value of x

class Jhat:

    # constructor
    def __init__(self, domdim = 2, codomdim = 2, layers = [100,100,50,20], batch_size = 50, epochs = 50, nbr = 0, r_max = 0.0, learning_rate = 0.0001, max_w=0,verbose=True):

        # store parameters
        self.domdim = domdim
        self.codomdim = codomdim
        self.layers = layers
        self.batch_size = batch_size if batch_size > 0 else 50
        self.epochs = epochs if epochs > 0 else 50
        self.nbr = nbr if nbr >= domdim else 2*domdim
        self.r_max = r_max
        self.max_norm = max_w
        self.verbose = verbose

        # create and store ANN
        self.model = Sequential()
        # input layer
        self.model.add(Dense(layers[0],input_shape=(domdim,),activation='swish',kernel_constraint=max_norm(max_w) if max_w>0 else None))
        # hidden layers
        for n_neurons in layers[1:]:
            self.model.add(Dense(n_neurons,activation='swish',kernel_constraint=max_norm(max_w) if max_w>0 else None))
        # output layer
        self.model.add(Dense(domdim * codomdim,kernel_constraint=max_norm(100.) if max_w>0 else None))
        self.model.compile(loss=self.createLoss(),optimizer=adam_v2.Adam(learning_rate=learning_rate))

    # internal: loss function generator
    # The core of the entire process, this produces a closure used as the loss
    # function to train the network from the data cloud to estimate the Jacobian
    def createLoss(self):
        def loss(real,predict):
            dx=real[:,0:self.domdim]
            df=real[:,self.domdim:]
            return tf.math.reduce_mean(tf.math.square(tf.math.subtract(df,tf.linalg.matvec(tf.reshape(predict,(self.batch_size,self.codomdim,self.domdim)),dx))))
        return loss

    # internal: prepare the data set for training.
    # For a given pair (x,F(x)) in the submitted sample:
    # 1. find self.nbr closest neighbors x_1,..,x_nbr to x.
    # 2. compute and store (x-x_1, F(x)-F(x_1)), (x-x_2,F(x)-F(x_2)), ...
    # 3. return the array obtained by processing each sample point.
    def prepareData(self,x,fx,train_mode=True,zero=0):
        if self.verbose :
            print("Preparing data from sample")
            print("Input shape ",x.shape)
            print("Output shape ",fx.shape)

        nbrs = NearestNeighbors(n_neighbors=self.nbr,algorithm='ball_tree').fit(x)
        dist, indices = nbrs.kneighbors(x)

        if self.verbose:
            print("Minimal distance :",np.amin(dist))
            print("Average distance :",np.average(dist))
            print("Maximal distance :",np.amax(dist))

        if self.r_max==0:
            self.r_max = np.amax(dist)+1.0

        N = x.shape[0] * self.nbr
        pos = np.empty( (N,self.domdim) )
        delta = np.empty( (N,self.domdim+self.codomdim) )
        reservedx = np.empty( (N,self.domdim) )
        reservefx = np.empty( (N,self.codomdim) )
        k = 0

        if train_mode:
            for idx in indices[:,0]:
                for idx2 in indices[idx,1:]:
                    dx = np.subtract(x[idx2],x[idx])
                    dxnorm = np.linalg.norm(dx)
                    if dxnorm < self.r_max and dxnorm>zero:
                        pos[k] = x[idx]
                        delta[k,0:self.domdim] = dx/dxnorm
                        delta[k,self.domdim:] = np.subtract(fx[idx2],fx[idx])/dxnorm
                        k = k + 1
        else:
            for idx in indices[:,0]:
                for idx2 in indices[idx,1:]:
                    dx = np.subtract(x[idx2],x[idx])
                    dxnorm = np.linalg.norm(dx)
                    div = np.linalg.norm(fx[idx2])
                    if dxnorm < self.r_max and dxnorm>0 and div>zero:
                        pos[k] = x[idx]
                        delta[k,0:self.domdim] = dx/div
                        delta[k,self.domdim:] = np.subtract(fx[idx2],fx[idx])/div
                        k = k + 1

        if k < self.batch_size:
            self.batch_size = k
        else:
            r = k % self.batch_size
            if r != 0:
                d = self.batch_size - r
                for j in range(d):
                    pos[k+j] = pos[0]
                    delta[k+j] = delta[0]
                k += d

        if self.verbose:
            print("Number of training data points: ", k)
            print("Finalized batch size: ", self.batch_size)

        # shuffle
        randind = np.arange(k)
        np.random.shuffle(randind)
        pos = pos[:k]
        delta = delta[:k]
        pos = [ pos[j] for j in randind ]
        delta = [ delta[j] for j in randind ]

        return (tf.convert_to_tensor(pos[:k]), tf.convert_to_tensor(delta[:k]))

    # fit: train the ANN with a sample set, passed as a parameter.
    def fit(self,x,fx):
        F = self.prepareData(x,fx)
        self.model.fit(F[0],F[1], epochs=self.epochs, batch_size=self.batch_size,verbose=2 if self.verbose else 0)
        return self


    # predict: compute the estimate of the Jacobian of F at each entry of x.
    def predict(self,x):
        N = x.shape[0]
        return self.model.predict(x).reshape(N,self.codomdim,self.domdim)

    # predict1: compute the estimate of the Jacobian of F at a single input x.
    def predict1(self,x):
        return self.model.predict([x])

    # predictflat: same as predict, but does not reshape output
    def predictflat(self,x):
        return self.model.predict(x)

    # validation functions
    def tangentl(self,j,dx,N):
        return tf.linalg.matvec(tf.reshape(j,(N,self.codomdim,self.domdim)),dx)

    def validate(self,x,fx,zero=0):
        F = self.prepareData(x,fx,train_mode=False,zero=zero)
        dfhat = tf.convert_to_tensor(self.predict(F[0]))
        dx=tf.cast(F[1][:,0:self.domdim],dtype=np.float32)
        df=tf.cast(F[1][:,self.domdim:],dtype=np.float32)
        N=df.shape[0]
        return tf.subtract(df,self.tangentl(dfhat,dx,N))
\end{lstlisting}


\begin{thebibliography}{99}

\bibitem{TensorFlow}
Mart\'in Abadi, Ashish Agarwal, Paul Barham, Eugene Brevdo, Zhifeng Chen, Craig Citro, Greg S. Corrado, Andy Davis, Jeffrey Dean, Matthieu Devin, Sanjay Ghemawat, Ian Goodfellow,
Andrew Harp, Geoffrey Irving, Michael Isard, Rafal Jozefowicz, Yangqing Jia, Lukasz Kaiser, Manjunath Kudlur, Josh Levenberg, Dan Man\'e, Mike Schuster, Rajat Monga, Sherry Moore, Derek Murray, Chris Olah, Jonathon Shlens, Benoit Steiner, Ilya Sutskever, Kunal Talwar, Paul Tucker, Vincent Vanhoucke, Vijay Vasudevan, Fernanda Vi\'egas, Oriol Vinyals, Pete Warden, Martin Wattenberg, Martin Wicke, Yuan Yu and Xiaoqiang Zheng,
\textit{TensorFlow: Large-scale machine learning on heterogeneous systems},
2015. Software available from \texttt{tensorflow.org}.

\bibitem{Berner}
J.~Berner, D.~Elbr\"achter, P.~Grohs and A.~Jentzen, \emph{Towards a regularity theory for ReLU networks – chain rule and global error estimates}, 2019 13th International conference on Sampling Theory and Applications (SampTA), 2019, pp.~1--5, doi: 10.1109/SampTA45681.2019.9031005.

\bibitem{Bryson}
  Arthur Bryson, \textit{A gradient method for optimizing multi-stage allocation processes}, Proceedings of the Harvard Univ. Symposium on digital computers and their applications, pp.~3--6, April 1961. Harvard University Press.

\bibitem{Conway}
  John Conway, \textit{A course in functional analysis}, 2nd edition. \emph{Graduate Texts in Mathematics, 96}. Springer-Verlag, New York, 1990, xvi+399.

\bibitem{Dreyfus}
  Stuart Dreyfus, \textit{The numerical solution of variational problems}, Journal of Mathematical Analysis and Applications, \textbf{5} (\textbf{1}), pp.~30--45. doi:10.1016/0022-247x(62)90004-5

\bibitem{Fu}
Michael C.~Fu, \emph{Gradient Estimation}, in Handbooks in Operations Research and Management Science, vol.~\textbf{13} ``Simulation'', pp.~575--616, North-Holland, 2006.

\bibitem{ReLu}
Kunihiko Fukushima, \textit{Visual feature extraction by a multilayered network of analog threshold elements}, IEEE Transactions on Systems Science and Cybernetics \textbf{5} (\textbf{4}) (1969), pp.~322--333. doi:10.1109/TSSC.1969.300225.

\bibitem{NumPy}
C.R.~Harris, K.J.~Millman, S.J.~van der Walt et al, \emph{Array programming with NumPy}, Nature \textbf{585} (2020), pp.~357--362. DOI: 10.1038/s41586-020-2649-2.

\bibitem{He}
Xing He, Lei Chu, Robert Qiu, Qian Ai and Wentao Huang, \textit{Data-driven Estimation of the Power Flow Jacobian Matrix in High Dimensional Space},
arxiv.org/abs/1902.06211

\bibitem{Kelley}
  Henry Kelley, \textit{Gradient theory of optimal flight paths}, ARS Journal. \textbf{30} (\textbf{10}), pp.~947--954. doi:10.2514/8.5282

\bibitem{Lagaris}
I.E.~Lagaris, A.~Likas and D.I.~Fotiadis, \emph{Artificial neural networks for solving ordinary and partial differential equations}, in IEEE Transactions on Neural Networks, vol.~\textbf{9}, no.~\textbf{5}, pp.~987--1000, Sept.~1998, doi: 10.1109/72.712178.

\bibitem{Autograd}
Dougal Maclaurin, David Duvenaud and Ryan P.~Adams, \textit{Autograd: Effortless gradients in numpy}, in ICML 2015 AutoML Workshop \textbf{238} (2015), 5 pages.

\bibitem{Rudd}
K.~Rudd, G.D.~Muro and S.~Ferrari, \emph{A Constrained Backpropagation Approach for the Adaptive Solution of Partial Differential Equations}, in IEEE Transactions on Neural Networks and Learning Systems, vol.~\textbf{25}, no.\textbf{3}, pp.~571--584, March 2014, doi: 10.1109/TNNLS.2013.2277601.

\bibitem{Rudin}
  Walter Rudin, \textit{Principles of mathematical analysis}, 3rd edition, International Series in Pure and Applied Mathematics. \emph{McGraw-Hill Book Co., New York-Auckland-D{\"u}sseldorf}, 1976, x+342 pp.

\bibitem{Rumelhart}
  David Rumelhart, Geoffrey Hinton, Ronald Williams, \textit{8. Learning Internal Representations by Error Propagation} in \emph{Rumelhart, David E.; McClelland, James L. (eds.). Parallel Distributed Processing : Explorations in the Microstructure of Cognition. Vol. 1 : Foundations}. MIT Press. ISBN 0-262-18120-7.

\bibitem{Runge}
Carl Runge, \textit{\"Uber empirische Funktionen und die Interpolation zwischen \"aquidistanten Ordinaten}, Zeitschrift f\"ur Mathematik und Physik \textbf{46} (1901), pp.~224--243.

\bibitem{Wang}
Guanhua Wang and Jeffrey A.~Fessler,
\textit{Efficient approximation of Jacobian matrices involving a non-uniform fast Fourier transform (NUFFT)},
arxiv.org/abs/2111.02912.

\end{thebibliography}
\end{document}